\documentclass[letterpaper]{article}
\usepackage{aaai2027}  

\newif\ifarxiv
\arxivfalse
\newcommand{\arxivonly}[1]{\ifarxiv#1\fi}

\usepackage[hyphens]{url}
\usepackage{graphicx}
\usepackage{natbib}              
\usepackage{caption}             
\usepackage{algorithm}
\usepackage{algorithmic}

\usepackage{newfloat}
\usepackage{amsfonts}
\usepackage{amssymb}
\usepackage{amsthm}
\usepackage{thmtools}
\declaretheorem[style=definition]{definition}

\declaretheorem[style=definition]{lemma}
\declaretheorem[style=definition]{proposition}
\declaretheorem[style=definition]{corollary}
\declaretheorem[style=theorem]{theorem}
\declaretheorem[style=remark]{remark}

\usepackage{listings}
\DeclareCaptionStyle{ruled}{labelfont=normalfont,labelsep=colon,strut=off}
\floatstyle{ruled}
\newfloat{listing}{tb}{lst}{}
\floatname{listing}{Listing}

\usepackage{booktabs}
\usepackage{multirow}
\usepackage{amsmath}

\title{SemRD-V2X: Closure-Guided Communication with Bounded Inference for Cooperative Perception}

\author{
    Hu Xu\textsuperscript{\rm 1},
    Chun Li\textsuperscript{\rm 1},
    Siyuan Qiu\textsuperscript{\rm 1},
    Zeyan Li\textsuperscript{\rm 1},
    Jianfeng Xu\textsuperscript{\rm 1}\textsuperscript{\textdagger}
}
\affiliations{
    \textsuperscript{\rm 1}Shanghai Jiao Tong University, Shanghai, China\\
    \texttt{xuhu6736@sjtu.edu.cn}, \texttt{xujf@sjtu.edu.cn}\\
}

\begin{document}

\maketitle

\begin{abstract}
Vehicle-to-Everything (V2X) cooperative perception improves 3-D detection by sharing intermediate features, but dense remote features may repeat context that the ego agent can infer locally. Most communication-efficient designs optimize masks or codes empirically, leaving a more basic question open: which remote evidence is indispensable given the receiver's own observation? We introduce a closure-fidelity perspective on ego conditioned remote perception. Under a finite deductive abstraction and explicit conditions, its rate--distortion function decomposes over an irredundant core, and the exact zero-distortion rate becomes $P_A H(\pi_A)$. This analysis suggests a concrete design principle: transmit compact evidence and recover derivable context with bounded receiver-side inference. Guided by this principle, SemRD-V2X is an operational neural proxy that combines exact-budget BEV support selection, pointwise channel compression, and masked shared-weight reconstruction before standard fusion. Experiments on simulated V2XSet and real-world DAIR-V2X
validate the resulting design. In a controlled five-run V2XSet comparison against a locally reproduced V2X-ViT-v1 baseline on one Tesla V100, SemRD-V2X reduces the analytical feature payload by $26.6\times$ while improving AP@0.5/AP@0.7 by 4.13/8.57 points, with 3.81\% additional mean compute latency. These results position closure fidelity as both an analytical lens and an actionable design principle for communication-efficient cooperative perception.
\end{abstract}







\section{Introduction}
\label{sec:intro}

Vehicle-to-Everything (V2X) cooperative perception extends an ego vehicle's
perception range and mitigates occlusion by combining observations from nearby
vehicles and roadside infrastructure. Intermediate-feature fusion balances
the rich evidence of raw-sensor sharing against the lower cost of exchanging
final detections \citep{v2xvit2022,xu2024v2x,where2comm2022}. Public
benchmarks span both simulated vehicle-to-vehicle settings and real-world
vehicle--infrastructure deployments \citep{opv2v2021,dairv2x2022}. However, a dense
remote BEV tensor may contain context that is predictable from selected remote
evidence together with the ego observation. This raises a question more
fundamental than how to compress the tensor:
\emph{which remote perceptual content must remain explicit, and which part can
be replaced by bounded receiver-side inference?}

Communication-efficient methods reduce traffic through spatial masks,
utility-aware policies, compact bottlenecks, quantization, or codebooks
\citep{who2com2020,when2com2020,where2comm2022,how2comm2023,cobevt2022,
codefilling2024}. These methods
provide effective empirical rate--accuracy trade-offs, but they typically
treat transmission as task-driven sparsification or coding rather than
characterizing which evidence is deductively indispensable given receiver
side information. Classical and task-oriented rate--distortion theory provides
a general optimization framework
\citep{shannon1948,cover2006,tishby1999,bourtsoulatze2019}, but its conclusions
depend on the distortion measure. Common symbol-, feature-, and task-level
criteria do not equate messages by their inferential consequences. What is
missing is a fidelity criterion that separates evidence that must be
communicated from context that remains derivable at the receiver.

We introduce a \emph{closure-fidelity} perspective~\citep{xu2026rate} for
ego-conditioned cooperative perception. Under a finite deductive abstraction,
perceptual features are treated as statements linked by a shared inference
system, and two representations are equivalent when they induce the same
deductive closure. This separates an irredundant core from derivable
statements. When reconstructions are restricted to the source
closure, the rate--distortion function decomposes as
$R_{\mathrm{V2X}}(D)=P_A R^{(A)}(D/P_A)$, where $P_A$ is the core mass and
$R^{(A)}$ is the rate--distortion function of the core sub-source. When
distinct core statements have disjoint zero-distortion reconstruction sets,
$R_{\mathrm{V2X}}(0)=P_A H(\pi_A)$. Bounding the decoder to at most $\delta$
inference steps yields a zero-distortion rate that is non-increasing in
$\delta$ under explicit disjointness and order-robustness conditions.

We translate this perspective into \emph{SemRD-V2X}, the operational
communication design shown in Figure~\ref{fig:semrd-overview}. A lightweight
scorer learns remote BEV importance through a continuous Gumbel relaxation
during training and selects an exact-budget Top-$k$ support at inference. A
pointwise codec reduces the channel cost of selected positions, and a
shared-weight Differentiable Inference Module (DIM) reconstructs only omitted
positions for $\delta$ steps before V2X-ViT fusion. Thus $\rho$ controls the
analytical feature payload, while $\delta$ controls receiver-side reconstruction
effort. These mechanisms operationalize structural principles suggested by
closure fidelity.

Across V2XSet and DAIR-V2X, SemRD-V2X remains competitive in simulated and
real-world evaluation. In a controlled five-run comparison on one Tesla V100,
it reduces the analytical feature payload of a locally reproduced V2X-ViT-v1
baseline by $26.6\times$ and improves AP@0.5/AP@0.7 by $4.13/8.57$ points,
with $3.81\%$ higher mean compute latency. These results support a favorable
empirical communication--accuracy--computation trade-off. Our contributions
are threefold:
\begin{itemize}
    \item We introduce a closure-fidelity perspective for ego-conditioned V2X
    perception and derive conditional core-only and bounded-depth rate
    characterizations under a finite deductive abstraction.
    \item We translate the resulting principles into SemRD-V2X, an operational
    design with exact-budget support selection, pointwise coding, and bounded
    shared-weight reconstruction.
    \item We evaluate its communication--accuracy--computation trade-off on
    V2XSet and DAIR-V2X through rate and depth controls, robustness tests,
    component ablations, and a controlled payload--latency comparison.
\end{itemize}

\section{Related Work}
\label{sec:related}

\begin{figure*}[ht]
    \centering
    \includegraphics[width=\textwidth]{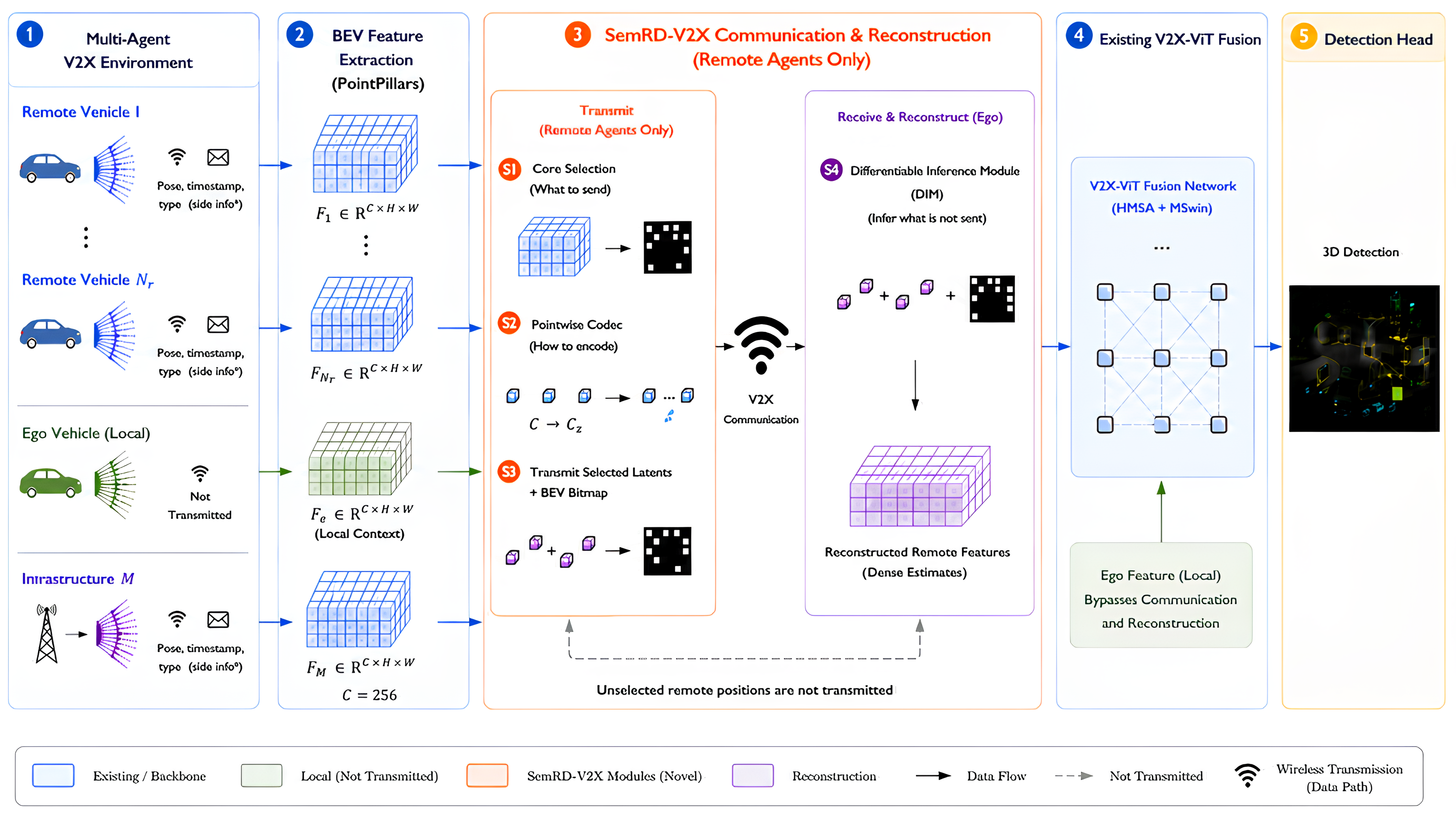}
    \caption{SemRD-V2X encodes selected remote BEV positions and a spatial
    bitmap. The receiver reconstructs omitted positions through bounded
    refinement before V2X-ViT fusion. Analytical payload includes latents and
    the bitmap but excludes the ego feature and network overhead.}
    \label{fig:semrd-overview}
\end{figure*}

\textbf{Cooperative Perception and Communication Efficiency.}
Intermediate-fusion methods aggregate learned BEV features through feature
fusion, message passing, distillation, transformers, or heterogeneous feature
alignment, while system-level work frames communication as an edge-assisted
autonomous-driving constraint
\citep{fcooper2019,opv2v2021,v2vnet2020,disconet2021,v2xvit2022,
xu2024v2x,heal2024,edgev2x2021}. Communication-efficient variants instead
learn when, where, or at what precision to communicate. Who2com and When2com learn which
collaborators to query or when communication should occur through handshake
and graph-grouping mechanisms \citep{who2com2020,when2com2020}.
Where2Comm selects informative BEV locations
\citep{where2comm2022}; How2Comm couples utility-aware selection with compact
messages \citep{how2comm2023}; other systems use task-dependent suppression,
sparse BEV processing, or codebook representations
\citep{chiu2023selective,cobevt2022,codefilling2024}. Latency compensation and
pose-robust fusion address complementary temporal and spatial misalignment
effects \citep{latencyaware2022,robcoop2023}. These methods establish spatial
selection and compact coding as effective primitives, but their objectives
generally do not specify which remote evidence remains indispensable once the
ego feature is available or define deductive fidelity for omitted context.
SemRD-V2X instead uses closure fidelity to motivate their joint operation with
bounded shared-weight reconstruction; selection and coding are not claimed as
novel in isolation.

\textbf{Rate--Distortion Coding with Side Information.}
Classical rate--distortion theory minimizes statistical rate under a prescribed
distortion \citep{shannon1948,cover2006}, while Slepian--Wolf and Wyner--Ziv
coding exploit correlated sources or decoder side information
\citep{slepian1973noiseless,wyner1976rate}. The information bottleneck,
DeepJSCC, semantic communication, and rate--distortion--perception theory
emphasize downstream relevance, end-to-end transmission, preservation of
meaning, or perceptual quality
\citep{tishby1999,bourtsoulatze2019,xie2021deepsc,blau2019}. These frameworks exploit
statistical dependence, receiver side information, or task relevance, but
they do not define fidelity as equality of deductive consequences. Our setting
asks instead whether the transmitted remote evidence and the ego observation
preserve the same deductive consequences as the complete remote message.
For finite alphabets, the Blahut and Arimoto procedures provide standard
computational routes for evaluating classical information-theoretic quantities
\citep{blahut1972,arimoto1972}; our analysis changes the fidelity criterion
rather than introducing another numerical solver for the classical objective.

\textbf{Deductive Coding and V2X Positioning.}
Deductive source coding defines fidelity through equality of deductive closures
and studies zero-distortion coding, bounded inference, and restricted
reconstruction vocabularies for abstract finite sources \citep{xu2026rate}.
We specialize this framework to ego-conditioned V2X by treating the ego feature
as decoder context and remote observations as the source, yielding core-only
and depth-constrained rate characterizations. SemRD-V2X operationalizes these
implications through budgeted support selection, pointwise coding, and bounded
local reconstruction.


\section{Methodology}
\label{sec:methodology}

SemRD-V2X communicates remote BEV features under a fixed spatial budget and
reconstructs omitted context before ego-conditioned fusion. Guided by the
closure-fidelity abstraction below, it combines inference-time exact-budget
support selection, pointwise channel coding, and shared-weight masked
reconstruction: $\rho$ controls transmission, while $\delta$ controls
receiver-side refinement at fixed payload.

\subsection{Ego-Conditioned Closure-Fidelity Model}

Let $\mathcal E$ be a finite set of ego statements available at the receiver
and $\mathcal S$ a finite set of remote statements to be communicated. A shared
proof system induces the contextual closure
\begin{equation}
\operatorname{Cn}_{\mathcal E}(\mathcal T)
=\{s:\mathcal E\cup\mathcal T\vdash s\},
\qquad \mathcal T\subseteq\mathcal S.
\label{eq:contextual-closure}
\end{equation}
A fixed canonical deletion order yields an irredundant generator
$A=\operatorname{Atom}(\mathcal S)$ such that
$\operatorname{Cn}_{\mathcal E}(A)=\operatorname{Cn}_{\mathcal E}(\mathcal S)$
and no $a\in A$ is derivable from $A\setminus\{a\}$. For a source statement
$s$ and reconstruction $\hat s$, let
$\mathcal S_{s\rightarrow\hat s}=(\mathcal S\setminus\{s\})\cup\{\hat s\}$
and write $\mathcal C=\operatorname{Cn}_{\mathcal E}(\mathcal S)$ and
$\widehat{\mathcal C}=\operatorname{Cn}_{\mathcal E}
(\mathcal S_{s\rightarrow\hat s})$. Closure distortion is
\begin{equation}
d_{\mathrm{V2X}}(s,\hat s\mid\mathcal S)
=1-\frac{|\mathcal C\cap\widehat{\mathcal C}|}
{|\mathcal C\cup\widehat{\mathcal C}|}.
\label{eq:closure-distortion}
\end{equation}
Thus zero distortion means preservation of deductive consequences, not equality
of feature vectors. For $S\sim P_S$ and reconstruction $\widehat S$, define
$R_{\mathrm{V2X}}(D)$ by minimizing $I(S;\widehat S)$ subject to expected
closure distortion at most $D$.

\begin{theorem}[Core-only rate in the deductive abstraction]
\label{thm:core-rate}
Let $P_A=P_S(A)$ and, when $P_A>0$, $\pi_A=P_S(\cdot\mid A)$. If the
reconstruction alphabet is contained in
$\operatorname{Cn}_{\mathcal E}(\mathcal S)$, then
\begin{equation}
R_{\mathrm{V2X}}(D)
=P_A R^{(A)}(D/P_A).
\label{eq:core-rate}
\end{equation}
Here $R^{(A)}$ is the rate--distortion function of $S$ conditioned on $S\in A$;
the right-hand side is zero when $P_A=0$. If, additionally, $P_A>0$, $A$ is
representable at the receiver, and
distinct core statements have disjoint zero-distortion reconstruction sets,
then $R_{\mathrm{V2X}}(0)=P_A H(\pi_A)$.
\end{theorem}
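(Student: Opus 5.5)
The plan is to show that, once reconstructions are confined to $\mathcal C=\operatorname{Cn}_{\mathcal E}(\mathcal S)$, every non-core source statement has zero-distortion reconstructions for free. The rate problem then reduces to a mixture in which only the core component costs anything.

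\emph{Step 1 (non-core statements are free).} Take $s\in\mathcal S\setminus A$ and any $\hat s\in\mathcal C$. Since $\operatorname{Cn}_{\mathcal E}(A)=\mathcal C$ and $A\subseteq\mathcal S\setminus\{s\}$, we get $\operatorname{Cn}_{\mathcal E}(\mathcal S_{s\to\hat s})\supseteq\mathcal C$. Conversely, $\hat s\in\mathcal C$ means adding it derives nothing new, so the two closures coincide and $d_{\mathrm{V2X}}(s,\hat s\mid\mathcal S)=0$. Hence on the event $S\notin A$ the decoder can output a fixed symbol $\hat s_0\in\mathcal C$ at zero distortion and zero information. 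This step needs the canonical generator to lie inside $\mathcal S$.

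\emph{Step 2 (decomposition).} For achievability, take an optimal test channel $q^{(A)}$ for $\pi_A$ at distortion $D/P_A$. Use it when $S\in A$, and map to $\hat s_0$ otherwise. The expected distortion is $P_A\cdot D/P_A=D$. To bound the mutual information, let $J=\mathbf 1[S\in A]$, which is a function of $S$. Then $I(S;\widehat S)=I(S,J;\widehat S)=I(J;\widehat S)+P_A I(S;\widehat S\mid J=1)$. The term $I(J;\widehat S)$ can be driven to zero by choosing the fixed output consistent with the core channel's output distribution: on $S\notin A$, output $\widehat S$ drawn independently from the marginal of $q^{(A)}$. This is still zero distortion by Step 1 and makes $\widehat S\perp J$. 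So the rate is $\le P_A R^{(A)}(D/P_A)$.

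For the converse, use the same chain rule: $I(S;\widehat S)\ge P_A I(S;\widehat S\mid J=1)\ge P_A R^{(A)}(D_1)$, where $D_1$ is the conditional distortion on the core. Since $P_A D_1\le D$ and $R^{(A)}$ is non-increasing, the bound $P_A R^{(A)}(D/P_A)$ follows. The case $P_A=0$ is immediate from Step 1.

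\emph{Step 3 (zero distortion).} With $D=0$, we need $R^{(A)}(0)=H(\pi_A)$. Representability means each $a\in A$ has a nonempty zero-distortion set $Z_a\subseteq\mathcal C$, so $R^{(A)}(0)$ is finite. Disjointness of the $Z_a$ means any zero-distortion $\widehat S$ determines $S$ on the core. Therefore $I(S;\widehat S\mid J=1)=H(\pi_A)$, and the identity map into $Z_a$ attains it.

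The main obstacle I expect is the converse subtlety in Step 2: the decoder need not know $J$. The chain-rule argument handles this, because $J$ is a deterministic function of $S$ and so $I(S;\widehat S)=I(S,J;\widehat S)$. I would verify carefully that $R^{(A)}$ is evaluated over the same reconstruction alphabet $\mathcal C$, so that the two sides of the decomposition are defined over the same alphabet.
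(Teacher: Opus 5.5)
Your proposal is correct and follows essentially the paper's proof: non-core statements are distortion-free once $\widehat{\mathcal S}\subseteq\operatorname{Cn}_{\mathcal E}(\mathcal S)$, and the converse is a chain-rule bound through the core indicator. The paper routes that bound through $T=S$ on $A$ and $\bot$ otherwise, which is equivalent to your $(S,J)$. Achievability pads an optimal core channel with its own output marginal off the core, and at $D=0$ disjoint reconstruction sets allow deterministic core recovery. The only slip is that your chain-rule identity drops the term $(1-P_A)\,I(S;\widehat S\mid J=0)$; it is zero for your construction and nonnegative in the converse, so nothing breaks.
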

\begin{proof}[Proof sketch]
Statements in $\mathcal S\setminus A$ are closure-redundant, so only the core
mass $P_A$ contributes distortion. Restricting a test channel to $A$ gives the
converse, while extending an optimal core channel with its output marginal
gives achievability. At $D=0$, disjoint reconstruction sets identify the core
symbols, yielding $P_A H(\pi_A)$~\citep{xu2026rate}.
\end{proof}

If closure is truncated to at most $\delta$ applications of a monotone
finite-step operator, let $A_\delta$ contain the statements not derivable from
the remainder within $\delta$ steps and $P_\delta=P_S(A_\delta)$. Under
analogous representability and disjointness conditions, the same argument gives
\begin{equation}
R_{\mathrm{V2X}}(D,\delta)
=P_\delta R^{(A_\delta)}(D/P_\delta).
\label{eq:depth-rate}
\end{equation}
Under the stated disjointness and order-robustness conditions,
$R_{\mathrm{V2X}}(0,\delta)$ is non-increasing in the allowed depth. A
separate theory-only result covers restricted receiver vocabularies under
closure-equivalent representability and disjointness; SemRD-V2X neither
instantiates nor tests these conditions.

\subsection{From Structural Principles to an Operational Design}

The formal model does not prescribe a neural architecture. Instead, its
structural implications motivate three implementation principles: communicate
only a budgeted remote support, anchor the decoded evidence during omitted-
context reconstruction, and bound reconstruction independently of the
payload. Let
$\mathcal A=\{e\}\cup\mathcal A_r$ contain the ego and remote agents. A shared
PointPillars-style backbone~\citep{pointpillars2019} produces
$\mathbf F_i\in\mathbb R^{C\times H\times W}$, but only remote features cross
the link. For remote agent $i$, the resulting inference-time hard-mask
implementation satisfies
\begin{gather}
|\widehat A_i|=k,\qquad
\mathbf M_i^\rho\odot\mathbf X_i^{(t)}
=\mathbf M_i^\rho\odot\mathbf X_i^{(0)}
\quad \forall t\in\{0,\ldots,\delta\},\nonumber\\
\mathbf X_i^{(\delta)}
=\mathbf X_i^{(0)}+(1-\mathbf M_i^\rho)\odot
\mathbf R_i^{(\delta)},
\label{eq:method_constraints}
\end{gather}
where $k=\max\{1,\lfloor\rho HW\rfloor\}$, $\widehat A_i$ is the transmitted
support, $\mathbf M_i^\rho\in\{0,1\}^{H\times W}$ is its Top-$k$ mask, and
$\mathbf R_i^{(\delta)}$ is the cumulative omitted-region refinement. These
invariants form the theory-to-design bridge: communicate exactly $k$ spatial
positions, preserve their decoded evidence throughout refinement, and update
only omitted positions for at most $\delta$ steps. Thus $\rho$ controls the
inference-time spatial payload, while $\delta$ controls reconstruction
computation without changing that payload.

SemRD-V2X is therefore an operational neural proxy guided by closure fidelity,
not a solver for the abstract rate--distortion problem. It neither recovers a
symbolic core nor equates its analytical payload or neural diagnostics with the
abstract rate or closure distortion. The scorer and DIM receive only the remote
tensor and mask, while $\mathbf F_e$ enters at fusion; hence the overall decoder
is ego-conditioned, but inference-time support is not conditioned on the
current ego feature. Learned-versus-random support and matched-payload
reconstruction ablations consequently provide operational tests of the design,
not symbolic verification.

\subsection{Training-Relaxed Selection with an Exact Inference Budget}

To allocate a fixed spatial budget content-adaptively, a lightweight scorer
ranks the BEV positions of each remote agent $i\in\mathcal A_r$. At $(h,w)$,
it concatenates $\mathbf F_i(:,h,w)$ with five metadata values: normalized
coordinates, log $3\times3$ local feature variance, normalized maximum channel
magnitude, and normalized distance to the BEV center. A two-layer pointwise
MLP with 128 hidden units and ReLU activation produces
\begin{equation}
\alpha_i(h,w)=f_{\mathrm{score}}(
[\mathbf F_i(:,h,w),\mathbf m_i(h,w)]),
\qquad
\alpha_i\in\mathbb R^{H\times W}.
\label{eq:method_score}
\end{equation}
Training uses a differentiable Gumbel-threshold relaxation. Let
$\mathbf g_i$ contain i.i.d. standard Gumbel samples and define
\begin{align}
\boldsymbol\beta_i&=\alpha_i+\mathbf g_i,\qquad
\kappa_i^{\mathrm{tr}}=\operatorname{kth}(\boldsymbol\beta_i,k),\\
\mathbf S_i(h,w)
&=\sigma\!\left(
\frac{\boldsymbol\beta_i(h,w)-\kappa_i^{\mathrm{tr}}}{\tau}
\right).
\label{eq:method_soft_mask}
\end{align}
Here $\operatorname{kth}$ returns the $k$-th largest noisy score. The
continuous mask $\mathbf S_i$ is used in the training forward pass, and
the temperature $\tau$ is annealed to sharpen the relaxation.

At inference, no Gumbel perturbation or soft mask is used. The selected set and
binary mask are
\begin{equation}
\widehat A_i(\rho)=\operatorname{TopK}(\alpha_i,k),
\qquad
\mathbf M_i^\rho(h,w)=\mathbb I[(h,w)\in\widehat A_i(\rho)].
\label{eq:method_hard_mask}
\end{equation}
Top-$k$ index selection gives $|\widehat A_i|=k$ exactly. Consequently, the
strict cardinality and payload claims apply to inference; the training mask is
a continuous optimization surrogate and need not have mean exactly $\rho$.
For compact notation, let
\begin{equation}
\widetilde{\mathbf M}_i=
\begin{cases}
\mathbf S_i, & \text{during training},\\
\mathbf M_i^\rho, & \text{during inference}.
\end{cases}
\label{eq:method_active_mask}
\end{equation}

\subsection{Pointwise Coding and Support-Anchored Bounded Reconstruction}

To reduce per-position cost without spatially mixing omitted context, the
codec compresses only along the channel dimension:
\begin{equation}
\mathbf Z_i=Q_b\!\left(g_{\mathrm{enc}}(\mathbf F_i)\right),
\qquad
\mathbf X_i^{(0)}=
g_{\mathrm{dec}}\!\left(
\widetilde{\mathbf M}_i\odot\mathbf Z_i
\right),
\label{eq:method_codec}
\end{equation}
where $g_{\mathrm{enc}}$ and $g_{\mathrm{dec}}$ are bias-free $1\times1$
projections, $\mathbf Z_i\in\mathbb R^{C_z\times H\times W}$, and $Q_b$
simulates the configured transport precision by a cast round trip. The
evaluated configuration uses $C_z=64$ and FP16 ($b=16$); the analytical
payload assumes no entropy coder. Because the codec is pointwise, it changes
the channel representation without spatially mixing selected and omitted
positions before reconstruction.

To reconstruct missing context without further communication, the receiver
performs $\delta$ masked refinement steps. Starting from the decoded feature
$\mathbf X_i^{(0)}$, one iteration computes
\begin{align}
\mathbf P_i^{(t)}
&=W_p\,\mathrm{GELU}\!\left(
\mathrm{GN}(W_d*\mathbf X_i^{(t)})
\right),\\
\mathbf G_i^{(t)}
&=\sigma\!\left(
W_g*[\operatorname{mean}_c|\mathbf X_i^{(t)}|,
\widetilde{\mathbf M}_i]
\right),\\
\mathbf X_i^{(t+1)}
&=\mathbf X_i^{(0)}
+(1-\widetilde{\mathbf M}_i)\odot
\left(
\mathbf X_i^{(t)}
+\mathbf G_i^{(t)}\odot\mathbf P_i^{(t)}
\right).
\label{eq:method_dim}
\end{align}
Here $W_d$ is a depthwise $3\times3$ convolution, GN uses eight groups, and
$W_p$ is a pointwise channel mixer. The $3\times3$ convolution $W_g$ predicts
a scalar spatial gate from feature energy and the active mask. During
inference, the hard mask re-anchors every transmitted position after each
iteration; because $\mathbf X_i^{(0)}$ is zero at
unselected hard-mask positions, refinement changes only omitted positions.
During training, the same implemented update uses the continuous mask in
Eq.~\eqref{eq:method_active_mask}. All iterations share parameters; changing
$\delta$ changes the effective receptive field and computation without changing
the number of learned parameters.

After $\delta$ iterations, the reconstructed remote features
$\{\mathbf X_i^{(\delta)}\}_{i\in\mathcal A_r}$ are fused with the complete ego
feature $\mathbf F_e$ by the standard attention-based V2X-ViT detector
\citep{vaswani2017,v2xvit2022}.

\subsection{Optimization and Operational Diagnostics}

The optimized objective matches the checkpoint used in all reported full-model
experiments and jointly trains the selector, codec, DIM, and detector:
\begin{equation}
\mathcal L
=\mathcal L_{\mathrm{det}}
+\lambda_{\mathrm{rate}}\mathcal L_{\mathrm{rate}}
+\lambda_{\mathrm{rec}}\mathcal L_{\mathrm{rec}}.
\label{eq:method_loss}
\end{equation}
No detached full-message teacher or closure-consistency loss is used.

\paragraph{Score-concentration regularization.}
Let
\begin{equation}
\mathbf q_i=\operatorname{softmax}(\alpha_i/\tau),
\qquad
\mathcal L_{\mathrm{rate}}
=\frac{\rho}{N_r}\sum_{i\in\mathcal A_r}
\frac{H(\mathbf q_i)}{\log(HW)},
\label{eq:method_rate}
\end{equation}
where $N_r=|\mathcal A_r|$. Minimizing this normalized entropy encourages a
concentrated score distribution, where
$H(\mathbf q_i)=-\sum_{h,w}q_i(h,w)\log q_i(h,w)$. The factor $\rho$ is the
configured target fraction; because inference-time Top-$k$ already fixes the
transmitted cardinality, this regularizer changes the learned ranking rather than the
analytical payload. Its entropy is not an estimate of the theoretical
$H(\pi_A)$.

\paragraph{Omitted-region reconstruction supervision.}
During training, the DIM is supervised against the detached pre-mask remote
feature only where the relaxed mask omits content. Let $\operatorname{sg}$
denote stop-gradient and define
$\mathbf W_i=\operatorname{sg}(1-\mathbf S_i)$. The implemented loss is
\begin{equation}
\begin{aligned}
\mathcal L_{\mathrm{rec}}
&=\frac{1}{Z_{\mathrm{rec}}}
\sum_{i\in\mathcal A_r}\sum_{c,h,w}\mathbf W_i(h,w)\\
&\quad\cdot\ell_{\mathrm{SL1}}\!\left(
\mathbf X_i^{(\delta)}(c,h,w),
\operatorname{sg}(\mathbf F_i(c,h,w))\right),\\
Z_{\mathrm{rec}}
&=\max\!\left\{
C\sum_{i\in\mathcal A_r}\sum_{h,w}\mathbf W_i(h,w),1
\right\}.
\end{aligned}
\label{eq:method_rec}
\end{equation}
where $\ell_{\mathrm{SL1}}$ is elementwise Smooth-$L_1$. Detaching the weights
prevents this term from changing the selector merely by altering the measured
omitted area; it trains the codec and DIM to recover missing features under the
current relaxed mask.

\paragraph{Post-hoc payload and operational diagnostics.}
At inference, the analytical transmitted bytes for one cooperative frame are
\begin{equation}
B_{\mathrm{frame}}
=\sum_{i\in\mathcal A_r}
\left(
|\widehat A_i|\frac{C_zb}{8}
+\mathbb I[\rho<1]\frac{HWb_{\mathrm{mask}}}{8}
\right),
\label{eq:method_payload}
\end{equation}
where $|\widehat A_i|=k$ and $b_{\mathrm{mask}}=1$. The bitmap identifies
selected positions on the shared BEV lattice, so no coordinate list is counted.
Post-hoc diagnostics comprise channel-summed $L_1$ errors on selected and
omitted positions ($E_{\mathrm{sup}}$ and $E_{\mathrm{omit}}$), sparse--full
post-NMS agreement (PCF), and selected--saliency Top-$K$ overlap (SA@K); their
experimental roles are analyzed in Section~\ref{sec:exp-ablation}.

\section{Experiments}
\label{sec:exp}

We ask four questions: whether SemRD-V2X is competitive on simulated and
real-world benchmarks; whether learned support and reconstruction supervision
help at matched payload; whether $\rho$ and $\delta$ expose separate
communication-budget and bounded-reconstruction controls; and what robustness
and latency boundaries emerge under localization noise and timed inference.

\subsection{Experimental Protocol}
\label{sec:exp-setup}

\begin{table*}[t]
\centering
\small
\setlength{\tabcolsep}{4.0pt}
\begin{tabular}{@{}lcccccc@{}}
\toprule
& \multicolumn{4}{c}{V2XSet} & \multicolumn{2}{c}{DAIR-V2X} \\
\cmidrule(lr){2-5}\cmidrule(lr){6-7}
& \multicolumn{2}{c}{Perfect} & \multicolumn{2}{c}{Noisy} &
\multicolumn{2}{c}{Mean(P,N)} \\
\cmidrule(lr){2-3}\cmidrule(lr){4-5}\cmidrule(lr){6-7}
Method & AP@0.5$\uparrow$ & AP@0.7$\uparrow$ & AP@0.5$\uparrow$ &
AP@0.7$\uparrow$ & AP@0.5$\uparrow$ & AP@0.7$\uparrow$ \\
\midrule
Late Fusion & 0.727 & 0.620 & 0.549 & 0.307 & 0.520 & 0.422 \\
Early Fusion& 0.819 & 0.710 & 0.720 & 0.384 & 0.525 & 0.366 \\
F-Cooper    & 0.840 & 0.680 & 0.715 & 0.469 & 0.550 & 0.415 \\
OPV2V       & 0.807 & 0.664 & 0.709 & 0.487 & 0.548 & 0.420 \\
V2VNet      & 0.845 & 0.677 & 0.791 & 0.493 & 0.523 & 0.427 \\
Where2Comm  & 0.855 & 0.654 & 0.820 & 0.534 & 0.637 & 0.489 \\
How2Comm    & 0.867 & 0.720 & 0.841 & 0.671 & 0.623 & 0.471 \\
V2X-ViT-v1 & 0.882 & 0.712 & 0.836 & 0.614 & 0.561 & 0.446 \\
V2X-ViTv2  & 0.887 & 0.750 & \textbf{0.846} & \textbf{0.672} & 0.634 & 0.541 \\
\midrule
SemRD-V2X & \textbf{0.898} & \textbf{0.789} & 0.845 & 0.647 &
\textbf{0.676} & \textbf{0.553} \\
\bottomrule
\end{tabular}
\caption{3-D detection on V2XSet and DAIR-V2X. V2XSet reports AP separately
under Perfect (P) and Noisy (N), whereas DAIR-V2X reports Mean(P,N).}
\label{tab:main-comparison}
\end{table*}

\begin{table*}[htbp]
\centering
\small
\setlength{\tabcolsep}{4.0pt}
\begin{tabular}{@{}lrrrrrr@{}}
\toprule
Method & AP@0.5$\uparrow$ & AP@0.7$\uparrow$ & MiB/frame$\downarrow$ &
\shortstack{Measured compute\\(mean $\pm$ SD)$\downarrow$} &
\shortstack{Analytical comm.\\(150 Mbit/s)$\downarrow$} &
\shortstack{Illustrative total\\(mean $\pm$ SD)$\downarrow$} \\
\midrule
V2X-ViT-v1 & 0.8033 & 0.5606 & 13.6416 &
\textbf{144.348 $\pm$ 0.465} & 762.895 &
907.243 $\pm$ 0.465 \\
SemRD-V2X (Full) & \textbf{0.8445} & \textbf{0.6463} &
\textbf{0.5131} & 149.845 $\pm$ 3.589 &
\textbf{28.697} & \textbf{178.542 $\pm$ 3.589} \\
\bottomrule
\end{tabular}
\caption{Controlled V100 comparison under the V2XSet Noisy protocol using
epoch-30 checkpoints. All latencies are in ms/frame. Measured compute and the
illustrative sequential total report mean $\pm$ sample SD across five paired
runs; communication is analytical serialization at 150 Mbit/s. AP is
remeasured on the V100.}
\label{tab:latency}
\end{table*}

\begin{figure*}[!h]
    \centering
    \includegraphics[width=0.98\textwidth]{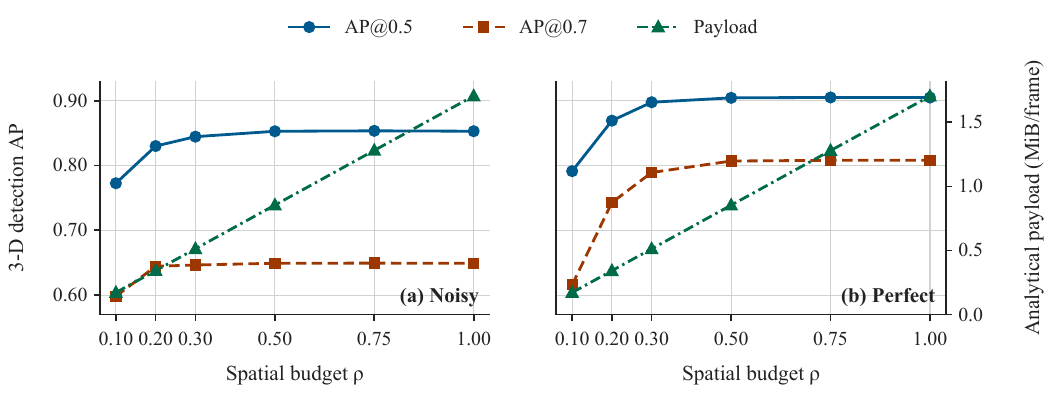}
    \caption{Spatial-budget sweeps at $\delta=2$. Axes report $\rho$, detection
    AP, and analytical payload (MiB/frame). (a) Noisy. (b) Perfect.}
    \label{fig:compression-sweeps}
\end{figure*}

\paragraph{Datasets and task.}
We evaluate multi-agent 3-D vehicle detection on the simulated V2XSet and
real-world vehicle--infrastructure DAIR-V2X benchmarks
\citep{xu2024v2x,dairv2x2022}. Following V2X-ViTv2, the nominal planar ranges
are $x\in[-140,140]$ m, $y\in[-40,40]$ m for V2XSet and
$x\in[-100,100]$ m, $y\in[-40,40]$ m for DAIR-V2X. The public V2X-ViT-v1
configuration used for our controlled V2XSet experiments realizes the former
with voxel-aligned LiDAR bounds $[-140.8,-38.4,-3,140.8,38.4,1]$ m.

\paragraph{Model training and operating point.}
The following controlled settings apply to V2XSet. We train the full model for
30 epochs using Adam with batch size 2, initial
learning rate $10^{-3}$, and weight decay $10^{-4}$. The primary operating
point is $\rho=0.3$, $\delta=2$, $C=256$, and a 64-channel FP16 transmission
bottleneck. The Gumbel temperature decreases linearly from $5.0$ to $0.5$ over
training. All full-model evaluations use the same epoch-30 checkpoint; sweeps
vary only the stated inference control or evaluation condition.

\paragraph{Protocols.}
We keep the two standard evaluation protocols separate throughout the paper. \emph{Perfect} assumes synchronized observations and accurate poses.
\emph{Noisy} uses the standard asynchronous protocol with a fixed 100-ms
temporal offset and Gaussian transmitter-pose perturbation. Unless otherwise
stated, the primary Noisy protocol uses $\sigma_{xyz}=0.2$ m and
$\sigma_{\mathrm{yaw}}=0.2^\circ$. For robustness, we also evaluate a
high-noise setting with $\sigma_{xyz}=0.5$ m, the same
$\sigma_{\mathrm{yaw}}=0.2^\circ$, and the same temporal offset.
DAIR-V2X follows its standard benchmark setting and assesses real-world
detection.
\arxivonly{Code-level units and seed assignments are documented in
Appendix~\ref{app:eval-implementation}.}

\paragraph{Metrics.}
We report AP@0.5 and AP@0.7 for 3-D vehicle detection. Communication is reported
as the \emph{analytical payload} in MiB/frame computed from
Eq.~\eqref{eq:method_payload}; it includes the selected latent values and
one-bit spatial bitmap, while excluding
packet headers, entropy coding, channel coding, retransmissions, and wireless
contention. The ablation table additionally reports the post-hoc diagnostics
defined in Section~\ref{sec:methodology}: support and omission errors, PCF, and
SA@K. Benchmark AP assesses accuracy competitiveness; payload and latency are
compared only against the locally reproduced V2X-ViT-v1 under the matched
protocol below.

\paragraph{Timed inference and statistics.}
We run five paired repetitions on one Tesla V100-PCIE-32GB with batch size one
under the Noisy protocol. Each timed frame is CUDA-synchronized; timing includes
host-to-device transfer, model forward, and post-processing but excludes data
loading. We report the mean and sample standard deviation of repetition-level
mean frame times, reusing one fixed checkpoint and evaluation set per method.

\subsection{Benchmark Accuracy and Controlled Efficiency}
\label{sec:exp-main-comparison}

\begin{table*}[htbp]
\centering
\small
\setlength{\tabcolsep}{4.0pt}
\begin{tabular}{@{}lccccccc@{}}
\toprule
Variant & MiB/frame$\downarrow$ & AP@0.5$\uparrow$ & AP@0.7$\uparrow$ &
PCF@0.5$\uparrow$ & SA@K$\uparrow$ & $E_{\mathrm{sup}}\downarrow$ & $E_{\mathrm{omit}}\downarrow$ \\
\midrule
Full                         & 0.5131 & \textbf{0.8445} & \textbf{0.6465} & \textbf{0.9357} & \textbf{0.4917} & 18.1582 & \textbf{7.2013} \\
Random selector              & 0.5131 & 0.6528 & 0.5032 & 0.9178 & 0.2998  & 8.9167 & 11.1486 \\
w/o rate regularization      & 0.5131 & 0.8148 & 0.5974 & 0.9301  & 0.4650 & 17.4119 & 9.5521 \\
w/o reconstruction supervision & 0.5131 & 0.8272 & 0.5951 & 0.9115  & 0.4710 & 21.9847 & 9.3176 \\
w/o codec                    & 4.0935 & 0.7976 & 0.5996 & 0.9254 & 0.4714 & 0.0000 & 11.5870 \\
\bottomrule
\end{tabular}
\caption{Matched component ablations under the V2XSet Noisy protocol. Each
variant is trained for 30 epochs and evaluated at $\rho=0.3$ and $\delta=2$.
Payload is analytical; PCF, SA@K, $E_{\mathrm{sup}}$, and
$E_{\mathrm{omit}}$ are post-hoc diagnostics.}
\label{tab:ablation}
\end{table*}

Table~\ref{tab:main-comparison} compares methods reported on both benchmarks
\citep{where2comm2022,v2xvit2022,how2comm2023,xu2024v2x}. On
V2XSet, SemRD-V2X obtains AP@0.5/AP@0.7 of 0.8445/0.6465 under Noisy and
0.8976/0.7892 under Perfect, the highest values in the latter comparison. On
DAIR-V2X, it reaches 0.676/0.553, exceeding the strongest compared AP@0.5/AP@0.7
by 3.9/1.2 points and demonstrating applicability to a real-world
vehicle--infrastructure benchmark. These accuracy comparisons establish
competitiveness, but communication and latency claims require the matched
local comparison below.

\paragraph{Matched payload and latency.}
\label{sec:exp-latency}
Table~\ref{tab:latency} re-evaluates both epoch-30 checkpoints under the same
V100 protocol, whereas Table~\ref{tab:main-comparison} reports benchmark AP;
only the former supports matched payload and latency claims. The 100-ms offset
selects an earlier remote timestamp without adding a wall-clock wait. At the
illustrative 150-Mbit/s goodput motivated by an experimental 5G-NR-V2X
study~\citep{10625456}, analytical serialization is added sequentially to
measured computation. SemRD-V2X reduces payload by 96.2\% (26.6$\times$), while
mean compute increases by $5.497\pm3.844$ ms/frame (3.81\%); we therefore do not
claim faster neural inference. The resulting illustrative total decreases from
907.243 to 178.542 ms/frame (80.3\%), and the communication saving exceeds the
compute overhead below the 20.0-Gbit/s break-even goodput. These are analytical
scenarios, not measured end-to-end wireless latency.
\arxivonly{Additional implementation details are provided in
Appendix~\ref{app:eval-implementation}.}
\arxivonly{Appendix~\ref{app:bandwidth-scenarios} details the calculation and
illustrative link-rate scenarios.}

\subsection{Operational Validation of the Design}
\label{sec:exp-ablation}

Table~\ref{tab:ablation} tests the selector, rate regularization,
reconstruction supervision, and codec under the matched Noisy setting. At the
same 0.5131-MiB/frame payload, random selection loses 19.17/14.33
AP@0.5/AP@0.7 points and substantially reduces SA@K, supporting
content-adaptive support selection. Removing rate regularization loses
2.97/4.91 points, while removing reconstruction supervision loses 1.73/5.14
points, with the larger AP@0.7 drop indicating a stronger effect on precise
localization. Removing the codec increases payload by $7.98\times$ while
lowering both AP metrics by 4.69 points. These matched comparisons support
learned selection, compact coding, and reconstruction supervision at the
chosen operating point.

$E_{\mathrm{sup}}$ is conditioned on selected positions, while PCF measures
sparse--full agreement rather than ground-truth correctness. The post-hoc
metrics therefore complement payload and AP but should not be interpreted
independently.

\subsection{Communication--Computation Controls and Robustness}

\paragraph{Spatial-budget control.}
\label{sec:exp-compression}

We vary the Top-$k$ budget $\rho$ at fixed $\delta=2$. Realized support tracks
the requested budget, payload grows approximately linearly, and detection
saturates after a low-rate knee (Figure~\ref{fig:compression-sweeps}).
\arxivonly{Full numeric results are reported in Appendix
Tables~\ref{tab:compression-noisy-app} and~\ref{tab:compression-perfect-app}.}

The Noisy sweep reaches the primary operating point at $\rho=0.3$, where the payload is 0.5131 MiB/frame and AP@0.5 is 0.8445. Relative to the full-spatial codec endpoint $\rho=1$, this operating point reduces payload by 69.9\% while losing 0.84 AP points at IoU 0.5. Increasing $\rho$ from 0.5 to 0.75 raises
payload by approximately 50\% but changes AP@0.5 by only 0.07 points. The Perfect sweep shows the same saturation pattern: at $\rho=0.3$, payload is reduced by 69.9\% relative to its $\rho=1$ endpoint, with losses of 0.72 AP points at IoU 0.5 and 1.88 points at IoU 0.7. These results support $\rho$ as an effective runtime rate control.

\paragraph{Fixed-payload decoder depth.}
\label{sec:exp-depth}

The depth experiment isolates receiver-side reconstruction computation by holding $\rho=0.3$ and the transmitted payload fixed. The checkpoint is trained with $\delta=2$; at evaluation, the same shared-weight DIM is forced to execute exactly $\delta$ iterations. Changing $\delta$ therefore changes the effective receptive field but neither the payload nor the number of learned parameters.


\begin{table}[htbp]
\centering
\small
\setlength{\tabcolsep}{2.5pt}
\begin{tabular}{@{}lccrrr@{}}
\toprule
Setting & $\sigma_{xyz}$ & Payload$\downarrow$ & AP@0.5$\uparrow$ &
AP@0.7$\uparrow$ & \shortstack{Miss.\\S-$L_1$}$\downarrow$ \\
\midrule
\multicolumn{6}{@{}l}{\emph{Decoder depth} (Noisy protocol, $\rho=0.3$)} \\
\cmidrule(lr){1-6}
$\delta=0$ & 0.2 & 0.5131 & 0.8344 & 0.6451 & 0.002193 \\
$\delta=1$ & 0.2 & 0.5131 & 0.8382 & 0.6440 & 0.002035 \\
$\delta=2$ & 0.2 & 0.5131 & \textbf{0.8445} & \textbf{0.6465} & \textbf{0.001979} \\
$\delta=3$ & 0.2 & 0.5131 & 0.8399 & 0.6399 & 0.002356 \\
$\delta=4$ & 0.2 & 0.5131 & 0.8187 & 0.6213 & 0.003886 \\
$\delta=5$ & 0.2 & 0.5131 & 0.7973 & 0.6119 & 0.008181 \\
\midrule
\multicolumn{6}{@{}l}{\emph{Localization noise} ($\rho=0.3$, $\delta=2$)} \\
\cmidrule(lr){1-6}
Delay-only & 0.0 & 0.5131 & 0.8596 & 0.6421 & -- \\
Noisy & 0.2 & 0.5131 & 0.8445 & 0.6465 & -- \\
High noise & 0.5 & 0.5131 & 0.8044 & 0.5719 & -- \\
\bottomrule
\end{tabular}
\caption{Fixed-budget decoder-depth and localization-noise diagnostics. The
upper block varies $\delta$ at $\rho=0.3$ and $\sigma_{xyz}=0.2$ m and Miss. S-$L_1$ denotes the missing-feature reconstruction
proxy. The lower block varies $\sigma_{xyz}$ (m) at $\rho=0.3$ and $\delta=2$}
\label{tab:depth-noise}
\end{table}

Two DIM iterations give the best observed operating point, improving AP@0.5 by
1.01 points over $\delta=0$. The theorem concerns an optimal decoder allowed
to use at most $\delta$ steps, whereas the deployed DIM is trained at
$\delta=2$ and forced to execute each selected depth. Because its residual
updates are neither idempotent nor contractive, longer unrolling can accumulate
error: from $\delta=2$ to 5, missing-feature Smooth-$L_1$ rises $4.13\times$
and AP@0.5 falls 4.72 points. Choosing the best iterate $t\leq\delta$ instead
gives a non-decreasing AP@0.5 envelope that plateaus at $\delta=2$, consistent
with the abstract at-most-depth feasible set. The fixed-payload sweep remains an
implementation diagnostic, not verification of a lower required rate.

\paragraph{Localization-noise sensitivity.}
\label{sec:exp-robustness}

At fixed $\rho=0.3$, $\delta=2$, payload, and 100-ms offset, the lower block of
Table~\ref{tab:depth-noise} varies translation noise. \emph{Delay-only} disables
pose perturbations but remains asynchronous; \emph{Noisy} uses
$\sigma_{xyz}=0.2$ m and $\sigma_{\mathrm{yaw}}=0.2^\circ$; \emph{High noise}
raises only $\sigma_{xyz}$ to $0.5$ m. This increase reduces AP@0.5/AP@0.7 by
4.01/7.46 points, with the larger AP@0.7 loss indicating greater sensitivity
of precise box alignment. This single-method stress test exposes sensitivity
to stronger pose error; it is not a comparative robustness claim or a universal
failure threshold.

\section{Conclusion}
\label{sec:conc}

We operationalize a closure-fidelity perspective on ego-conditioned V2X
communication in SemRD-V2X through exact-budget support, compact coding, and
bounded reconstruction. The method is competitive on V2XSet and DAIR-V2X; in
the controlled V100 comparison, it improves AP@0.5/AP@0.7 by 4.13/8.57 points
and reduces analytical payload by 96.2\% (26.6$\times$), with 3.81\% higher
mean compute latency. This establishes a practical communication--accuracy--
computation trade-off while keeping analytical payload and neural
reconstruction distinct from deductive rate and closure distortion. Future
work will combine sparse execution with packet-level wireless measurements and
extend evaluation to multimodal and temporal cooperative perception.



\ifarxiv
%

\appendix

\section{Technical Scope and Ego-Conditioned Deductive Source Model}
\label{app:scope-model}
\label{app:model}

This appendix supplies the formal support, implementation details, and
additional results omitted from the main paper. The theoretical part
specializes closure-fidelity deductive source coding to a finite,
ego-conditioned remote source and establishes a core-only lossy
characterization, an exact zero-distortion rate, and bounded-depth and
restricted-vocabulary extensions. These results motivate SemRD-V2X but do not
assert that the neural prototype computes a symbolic closure or core. All
logarithms are base $2$.

\subsection{Fixed Ego Context and Remote Source}
\label{app:model-kb}

Let $\mathbb S_e$ and $\mathbb S_r$ be finite universes of ego and remote
perceptual statements. A fixed ego context $\mathcal E\subseteq\mathbb S_e$ is
available at the decoder, while the remote source
$\mathcal S\subseteq\mathbb S_r$ must be communicated. An effective proof
system induces
\[
\operatorname{Cn}_{\mathcal E}(\mathcal T)
\triangleq
\{s\in\mathbb S_r:\mathcal E\cup\mathcal T\vdash s\},
\qquad \mathcal T\subseteq\mathcal S.
\]
We assume that $\operatorname{Cn}_{\mathcal E}$ is reflexive, monotone, and
idempotent on finite remote-statement sets.

The symbolic model abstracts continuous BEV features. A statement may be
associated with an agent-location embedding
$\phi(s_{i,h,w})=\mathbf F_i(:,h,w)$, where
$\mathbf F_i\in\mathbb R^{C\times H\times W}$, but the prototype does not
quantize neural vectors into a finite symbolic vocabulary.

\begin{definition}[Ego-Conditioned Irredundant Core]
\label{app:def:perceptual-core}
Fix a canonical order on $\mathcal S$. Initialize $A\leftarrow\mathcal S$ and
scan the elements in that order. Whenever the current statement $s$ satisfies
\[
s\in\operatorname{Cn}_{\mathcal E}(A\setminus\{s\}),
\]
delete it. The final set $A=\operatorname{Atom}(\mathcal S)$ is the
ego-conditioned irredundant core, and
$J\triangleq\mathcal S\setminus A$ is the redundant part.
\end{definition}

\begin{proposition}[Core Properties]
\label{app:prop:core-properties}
The core satisfies
$\operatorname{Cn}_{\mathcal E}(A)
=\operatorname{Cn}_{\mathcal E}(\mathcal S)$ and, for every $a\in A$,
$a\notin\operatorname{Cn}_{\mathcal E}(A\setminus\{a\})$.
\end{proposition}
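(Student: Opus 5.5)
The proof runs by induction over the single deletion scan of Definition~\ref{app:def:perceptual-core}. It uses only the three assumed properties of $\operatorname{Cn}_{\mathcal E}$: reflexivity ($\mathcal T\subseteq\operatorname{Cn}_{\mathcal E}(\mathcal T)$), monotonicity, and idempotence. Enumerate $\mathcal S=\{s_1,\dots,s_n\}$ in the canonical order. Let $A_0=\mathcal S$, and let $A_t$ be the current set after $s_t$ has been examined. Thus $A_t=A_{t-1}\setminus\{s_t\}$ if $s_t$ is deleted, and $A_t=A_{t-1}$ otherwise. Then $A=A_n$ and $A_0\supseteq A_1\supseteq\cdots\supseteq A_n$.

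\emph{Closure preservation.} I will show the invariant $\operatorname{Cn}_{\mathcal E}(A_t)=\operatorname{Cn}_{\mathcal E}(\mathcal S)$ for every $t$. The case $t=0$ is trivial, and a step that keeps $s_t$ does not change the set. Suppose instead that $s_t$ is deleted, and put $B=A_{t-1}\setminus\{s_t\}$.
\begin{itemize}
\item[(i)] Since $B\subseteq A_{t-1}$, monotonicity gives $\operatorname{Cn}_{\mathcal E}(B)\subseteq\operatorname{Cn}_{\mathcal E}(A_{t-1})$.
\item[(ii)] Reflexivity gives $B\subseteq\operatorname{Cn}_{\mathcal E}(B)$, and the deletion criterion gives $s_t\in\operatorname{Cn}_{\mathcal E}(B)$. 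Hence $A_{t-1}\subseteq\operatorname{Cn}_{\mathcal E}(B)$. Monotonicity and then idempotence yield $\operatorname{Cn}_{\mathcal E}(A_{t-1})\subseteq\operatorname{Cn}_{\mathcal E}(\operatorname{Cn}_{\mathcal E}(B))=\operatorname{Cn}_{\mathcal E}(B)$.
\end{itemize}
So $\operatorname{Cn}_{\mathcal E}(A_t)=\operatorname{Cn}_{\mathcal E}(A_{t-1})$, and the first claim follows at $t=n$.

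One technical point must be checked in step (ii). Applying $\operatorname{Cn}_{\mathcal E}$ to $\operatorname{Cn}_{\mathcal E}(B)$ requires that set to be a finite remote-statement set. This holds because $\operatorname{Cn}_{\mathcal E}(B)\subseteq\mathbb S_r$ and $\mathbb S_r$ is finite, so idempotence applies as assumed. If one insists that $\operatorname{Cn}_{\mathcal E}$ acts only on subsets of $\mathcal S$, use $\operatorname{Cn}_{\mathcal E}(B)\cap\mathcal S\supseteq A_{t-1}$ together with monotonicity instead.

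\emph{Irredundance.} Fix $a\in A$, say $a=s_t$. Because $a$ survives, it was not deleted when examined, so $a\notin\operatorname{Cn}_{\mathcal E}(A_{t-1}\setminus\{a\})$. All later steps only remove elements, so $A\setminus\{a\}\subseteq A_{t-1}\setminus\{a\}$. Monotonicity then gives $\operatorname{Cn}_{\mathcal E}(A\setminus\{a\})\subseteq\operatorname{Cn}_{\mathcal E}(A_{t-1}\setminus\{a\})$, which does not contain $a$.

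The main point needing care is this second part. One might worry that a single pass is insufficient, since a statement kept early could become derivable after later deletions. Monotonicity rules this out: later deletions shrink the set and can only shrink its closure. The argument therefore needs no repeated scans, and the canonical order affects which core is obtained but not its properties.
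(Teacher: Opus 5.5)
Your proof is correct and follows the paper's argument. Each deletion preserves the closure, by monotonicity in one direction and by reflexivity, monotonicity and idempotence in the other; irredundance holds because a retained $a$ derivable from $A\setminus\{a\}$ would, by monotonicity, already have been derivable from the larger set present when $a$ was scanned. Your write-up also makes explicit the reflexivity step and the finiteness of $\mathbb S_r$, both of which the paper uses tacitly. One small caveat: your parenthetical fallback, for $\operatorname{Cn}_{\mathcal E}$ defined only on subsets of $\mathcal S$, would still need a relativized form of idempotence, not monotonicity alone.
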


\begin{proof}
For one deletion step, let $B$ be the current retained set and suppose
$s\in\operatorname{Cn}_{\mathcal E}(B\setminus\{s\})$. Monotonicity gives one
closure inclusion. Conversely,
$B\subseteq\operatorname{Cn}_{\mathcal E}(B\setminus\{s\})$; monotonicity and
idempotence give the reverse inclusion. Each deletion therefore preserves
closure.

If a retained $a$ were derivable from $A\setminus\{a\}$, then it would also
have been derivable from the larger retained set present when $a$ was scanned.
The procedure would have deleted it, a contradiction.
\end{proof}

\begin{definition}[Ego-Conditioned Closure Distortion]
\label{app:def:perceptual-distortion}
Let $\widehat{\mathcal S}\subseteq\mathbb S_r$ be the reconstruction alphabet.
For $s_0\in\mathcal S$ and $\hat s\in\widehat{\mathcal S}$, define
\[
d_{\mathrm{V2X}}(s_0,\hat s\mid\mathcal S)
\triangleq
1-
\frac{
|\operatorname{Cn}_{\mathcal E}(\mathcal S)
\cap
\operatorname{Cn}_{\mathcal E}
((\mathcal S\setminus\{s_0\})\cup\{\hat s\})|
}{
|\operatorname{Cn}_{\mathcal E}(\mathcal S)
\cup
\operatorname{Cn}_{\mathcal E}
((\mathcal S\setminus\{s_0\})\cup\{\hat s\})|
},
\]
with the ratio defined as $1$ when the union is empty. Zero distortion is
therefore equivalent to equality of the two contextual closures.
\end{definition}

\begin{lemma}[Redundant Statements Are Distortion-Free]
\label{app:lem:redundant-free}
Assume
$\widehat{\mathcal S}\subseteq
\operatorname{Cn}_{\mathcal E}(\mathcal S)$. If $j\in J$ and
$\hat s\in\widehat{\mathcal S}$, then
$d_{\mathrm{V2X}}(j,\hat s\mid\mathcal S)=0$.
\end{lemma}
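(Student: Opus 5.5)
The plan is to show that the two contextual closures in Definition~\ref{app:def:perceptual-distortion} coincide. Zero distortion then follows, since the Jaccard ratio of two equal sets equals one; if both closures were empty the convention sets the ratio to $1$, and the distortion is again $0$. Write $\mathcal C=\operatorname{Cn}_{\mathcal E}(\mathcal S)$ and $\widehat{\mathcal C}=\operatorname{Cn}_{\mathcal E}((\mathcal S\setminus\{j\})\cup\{\hat s\})$. I will prove the two inclusions separately. Throughout I use reflexivity, monotonicity and idempotence of $\operatorname{Cn}_{\mathcal E}$, together with Proposition~\ref{app:prop:core-properties}.

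\emph{Step 1: $\widehat{\mathcal C}\subseteq\mathcal C$.} By reflexivity, $\mathcal S\setminus\{j\}\subseteq\mathcal C$. By hypothesis, $\hat s\in\widehat{\mathcal S}\subseteq\mathcal C$. Hence $(\mathcal S\setminus\{j\})\cup\{\hat s\}\subseteq\mathcal C$. Monotonicity then gives $\widehat{\mathcal C}\subseteq\operatorname{Cn}_{\mathcal E}(\mathcal C)$, and idempotence gives $\operatorname{Cn}_{\mathcal E}(\mathcal C)=\mathcal C$.

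One technical point needs care here. The closure operator is formally defined on subsets $\mathcal T\subseteq\mathcal S$, but we apply it to sets containing $\hat s\notin\mathcal S$. I will read the stated assumptions (reflexive, monotone, idempotent on finite remote-statement sets) as holding on all finite subsets of $\mathbb S_r$. This is consistent with Definition~\ref{app:def:perceptual-distortion}, which already applies $\operatorname{Cn}_{\mathcal E}$ to such sets.

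\emph{Step 2: $\mathcal C\subseteq\widehat{\mathcal C}$.} This is the main step. I will use that $j\in J$, so $j$ is not in the core, and therefore $A\subseteq\mathcal S\setminus\{j\}$. Monotonicity gives $\operatorname{Cn}_{\mathcal E}(A)\subseteq\operatorname{Cn}_{\mathcal E}(\mathcal S\setminus\{j\})$. Proposition~\ref{app:prop:core-properties} gives $\operatorname{Cn}_{\mathcal E}(A)=\mathcal C$. Applying monotonicity once more, for the superset obtained by adding $\hat s$, yields $\mathcal C\subseteq\widehat{\mathcal C}$.

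The subtle point I expect to be the obstacle is this. It is not immediate from the deletion procedure that $j\in\operatorname{Cn}_{\mathcal E}(\mathcal S\setminus\{j\})$. The procedure only shows that $j$ was derivable from the retained set at the time it was scanned, and that set excludes $j$ and is contained in $\mathcal S\setminus\{j\}$. Routing the argument through the core $A$, rather than through the scanning history, avoids this issue entirely. Combining Steps 1 and 2 gives $\widehat{\mathcal C}=\mathcal C$, hence $d_{\mathrm{V2X}}(j,\hat s\mid\mathcal S)=0$.
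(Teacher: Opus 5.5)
Your proof is correct and uses essentially the same ingredients as the paper. One direction follows from $A\subseteq\mathcal S\setminus\{j\}$, $\operatorname{Cn}_{\mathcal E}(A)=\operatorname{Cn}_{\mathcal E}(\mathcal S)$ and monotonicity; the other from $\hat s\in\operatorname{Cn}_{\mathcal E}(\mathcal S)$ with monotonicity and idempotence (the paper first shows $\operatorname{Cn}_{\mathcal E}(\mathcal S\setminus\{j\})=\operatorname{Cn}_{\mathcal E}(\mathcal S)$ and then notes that adding $\hat s$ changes nothing). As a minor aside, your closing worry is unfounded: the retained set minus $j$ at scan time lies inside $\mathcal S\setminus\{j\}$, so monotonicity gives $j\in\operatorname{Cn}_{\mathcal E}(\mathcal S\setminus\{j\})$ immediately, and routing through $A$ is cleaner but not required.
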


\begin{proof}
Because $A\subseteq\mathcal S\setminus\{j\}$ and
$\operatorname{Cn}_{\mathcal E}(A)
=\operatorname{Cn}_{\mathcal E}(\mathcal S)$, monotonicity gives
\[
\operatorname{Cn}_{\mathcal E}(\mathcal S\setminus\{j\})
=\operatorname{Cn}_{\mathcal E}(\mathcal S).
\]
Moreover,
$\hat s\in\operatorname{Cn}_{\mathcal E}(\mathcal S\setminus\{j\})$.
Adding $\hat s$ leaves the closure unchanged by monotonicity and idempotence.
\end{proof}

\section{Core-Only Characterization Under Closure Fidelity}
\label{app:theory}
\label{app:core-only}

Let $S\sim P_S$ take values in $\mathcal S$, and let $\widehat S$ be its
reconstruction. Define
\[
P_A\triangleq P_S(A),\qquad P_J\triangleq1-P_A,
\]
and, for $P_A>0$,
$\pi_A(a)\triangleq P_S(a\mid S\in A)$.

\begin{definition}[V2X Rate--Distortion Function]
\label{app:def:v2x-rd}
\[
R_{\mathrm{V2X}}(D)
\triangleq
\min_{P_{\widehat S|S}:\,
\mathbb E[d_{\mathrm{V2X}}(S,\widehat S\mid\mathcal S)]\le D}
I(S;\widehat S).
\]
\end{definition}

\begin{definition}[Core Sub-Source Rate--Distortion Function]
\label{app:def:core-rd}
For $P_A>0$, let $A_o\sim\pi_A$ and define
\[
R^{(A)}(D')
\triangleq
\min_{P_{\widehat S|A_o}:\,
\mathbb E[d_{\mathrm{V2X}}(A_o,\widehat S\mid\mathcal S)]\le D'}
I(A_o;\widehat S).
\]
\end{definition}

\begin{theorem}[Core-Only Lossy Characterization]
\label{app:thm:full-decomposition}
If
$\widehat{\mathcal S}\subseteq
\operatorname{Cn}_{\mathcal E}(\mathcal S)$, then
\[
R_{\mathrm{V2X}}(D)
=
P_A R^{(A)}\!\left(\frac{D}{P_A}\right),
\]
with value $0$ when $P_A=0$.
\end{theorem}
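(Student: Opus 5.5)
The proof will split the source by the indicator $T\triangleq\mathbb I[S\in A]$ and use Lemma~\ref{app:lem:redundant-free} to show that the redundant part $J$ carries no distortion. The converse will then come from the chain rule for mutual information, and achievability from extending an optimal core channel to $J$ by its own output marginal. I will first dispose of $P_A=0$. In that case $S\in J$ almost surely. By Lemma~\ref{app:lem:redundant-free}, any deterministic constant reconstruction $\hat s_0\in\widehat{\mathcal S}$ gives $\mathbb E[d_{\mathrm{V2X}}]=0\le D$ with $I(S;\widehat S)=0$, so $R_{\mathrm{V2X}}(D)=0$, matching the convention. From now on I assume $P_A>0$.

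\emph{Converse.} Take any test channel $P_{\widehat S|S}$ with $\mathbb E[d_{\mathrm{V2X}}(S,\widehat S\mid\mathcal S)]\le D$. Because $T$ is a function of $S$, the chain rule gives
\[
I(S;\widehat S)=I(S,T;\widehat S)=I(T;\widehat S)+I(S;\widehat S\mid T)\ge P_A\, I(S;\widehat S\mid T=1)+P_J\, I(S;\widehat S\mid T=0).
\]
I will drop the nonnegative $T=0$ term. Conditioned on $T=1$, $S$ has law $\pi_A$, and the restricted kernel $P_{\widehat S|S}(\cdot\mid a)$, $a\in A$, is a feasible channel for $R^{(A)}$. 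To bound its distortion, Lemma~\ref{app:lem:redundant-free} (which needs $\widehat{\mathcal S}\subseteq\operatorname{Cn}_{\mathcal E}(\mathcal S)$) gives $d_{\mathrm{V2X}}(j,\hat s\mid\mathcal S)=0$ for $j\in J$. Hence $D\ge\mathbb E[d]=P_A\,\mathbb E[d\mid T=1]$, so the restricted channel has distortion at most $D/P_A$. Therefore $I(S;\widehat S\mid T=1)\ge R^{(A)}(D/P_A)$, and taking the minimum over feasible channels yields $R_{\mathrm{V2X}}(D)\ge P_A R^{(A)}(D/P_A)$.

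\emph{Achievability.} Let $Q^\star_{\widehat S|A_o}$ attain $R^{(A)}(D/P_A)$, and let $q^\star$ be its output marginal under $\pi_A$. Define $P_{\widehat S|S}(\cdot\mid a)=Q^\star(\cdot\mid a)$ for $a\in A$ and $P_{\widehat S|S}(\cdot\mid j)=q^\star$ for $j\in J$. The expected distortion is $P_A\cdot(D/P_A)+P_J\cdot 0=D$. The conditional law of $\widehat S$ is $q^\star$ under both values of $T$, so $\widehat S$ is independent of $T$ and $I(T;\widehat S)=0$. Given $T=0$, $\widehat S$ is independent of $S$, so $I(S;\widehat S\mid T=0)=0$. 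The same chain rule then gives $I(S;\widehat S)=P_A R^{(A)}(D/P_A)$, which is the matching upper bound.

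The main obstacle is technical rather than conceptual. I must justify that the minima in the two rate--distortion definitions are attained: the channel polytopes are compact over finite alphabets, and mutual information is continuous. I must also handle the regime $D/P_A\ge 1$, where $R^{(A)}$ is $0$ because a constant reconstruction already meets the constraint since $d_{\mathrm{V2X}}\le 1$. If attainment turned out to be delicate, I would fall back on an $\varepsilon$-optimal core channel and let $\varepsilon\to0$.
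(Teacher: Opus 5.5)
Your proposal is correct and follows the paper's proof essentially step for step: the same split on $\mathbb I[S\in A]$, the redundant-statements lemma removing all distortion on $J$, and the same chain-rule converse; the paper's only detour is a data-processing step through a collapsed variable equal to $S$ on $A$ and $\bot$ on $J$, which is cosmetic. Achievability is also the paper's (optimal core channel extended by its output marginal on $J$), with one small slip: the expected distortion there is $P_A\,\mathbb E_{Q^\star}[d_{\mathrm{V2X}}]\le D$, not necessarily equal to $D$.
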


\begin{proof}
For $P_A=0$, Lemma~\ref{app:lem:redundant-free} gives zero distortion at zero
rate. Assume $P_A>0$. The lemma gives
\[
\mathbb E[d_{\mathrm{V2X}}(S,\widehat S\mid\mathcal S)]
=
P_A\,
\mathbb E[d_{\mathrm{V2X}}(S,\widehat S\mid\mathcal S)\mid S\in A].
\]
Let $B=\mathbb I[S\in A]$ and
\[
T=
\begin{cases}
S, & S\in A,\\
\bot, & S\in J.
\end{cases}
\]
For every feasible channel,
\begin{align*}
I(S;\widehat S)
&\ge I(T;\widehat S)\\
&=I(B;\widehat S)+I(T;\widehat S\mid B)\\
&\ge P_A I(A_o;\widehat S\mid B=1)\\
&\ge P_A R^{(A)}(D/P_A).
\end{align*}

For achievability, use an optimal core channel on $A$ and its output marginal
on every $j\in J$. The redundant states add no distortion, the output
marginal is identical for $B=0$ and $B=1$, and the channel is independent of
the identity of $j$ on $J$. Hence
$I(S;\widehat S)=P_A R^{(A)}(D/P_A)$.
\end{proof}

\begin{definition}[Zero-Distortion Reconstruction Sets]
For $s\in\mathcal S$, let
\[
\mathcal R_0(s)
=
\{\hat s\in\widehat{\mathcal S}:
d_{\mathrm{V2X}}(s,\hat s\mid\mathcal S)=0\}.
\]
\end{definition}

\begin{theorem}[Exact Zero-Distortion Rate]
\label{app:thm:zero-distortion}
Assume $P_A>0$, $A\subseteq\widehat{\mathcal S}$, and
$\mathcal R_0(a_1)\cap\mathcal R_0(a_2)=\emptyset$ for all distinct
$a_1,a_2\in A$. Then
\[
R_{\mathrm{V2X}}(0)=P_AH(\pi_A).
\]
\end{theorem}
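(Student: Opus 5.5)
The plan is to prove the two inequalities separately. Achievability will use an explicit test channel built from the identity on the core. The converse will reuse the chain of inequalities from the proof of Theorem~\ref{app:thm:full-decomposition}, with the disjointness hypothesis making the core identifiable from $\widehat S$. I will not invoke Theorem~\ref{app:thm:full-decomposition} as a black box. The present statement assumes $A\subseteq\widehat{\mathcal S}$ rather than $\widehat{\mathcal S}\subseteq\operatorname{Cn}_{\mathcal E}(\mathcal S)$, so Lemma~\ref{app:lem:redundant-free} may not apply to arbitrary reconstructions. I will therefore only send outputs in $A$, for which the needed zero-distortion facts can be checked directly.

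\emph{Achievability.} Define the channel as follows.
\begin{itemize}
\item If $S=a\in A$, output $\widehat S=a$. This is allowed since $A\subseteq\widehat{\mathcal S}$. Replacing $a$ by itself leaves $\mathcal S$ unchanged, so the distortion is $0$.
\item If $S=j\in J$, draw $\widehat S\sim\pi_A$ independently of $j$.
\end{itemize}
Zero distortion in the second case needs
\[
\operatorname{Cn}_{\mathcal E}((\mathcal S\setminus\{j\})\cup\{a\})=\operatorname{Cn}_{\mathcal E}(\mathcal S),
\]
which I would derive as follows.
\begin{itemize}
\item Since $A\subseteq\mathcal S\setminus\{j\}$, Proposition~\ref{app:prop:core-properties} and monotonicity give $\operatorname{Cn}_{\mathcal E}(\mathcal S\setminus\{j\})=\operatorname{Cn}_{\mathcal E}(\mathcal S)$.
\item Adding $a$, which already lies in $\mathcal S\setminus\{j\}$, changes nothing.
\end{itemize}
The output marginal is then $\pi_A$, so $H(\widehat S)=H(\pi_A)$. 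Also $H(\widehat S\mid S)=P_J H(\pi_A)$, since the output is deterministic on $A$. Hence $I(S;\widehat S)=P_AH(\pi_A)$.

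\emph{Converse.} Take any channel with $\mathbb E[d_{\mathrm{V2X}}]\le 0$. Distortion is nonnegative, so $\widehat S\in\mathcal R_0(S)$ almost surely. Let $B=\mathbb I[S\in A]$ and let $T$ be defined as in the proof of Theorem~\ref{app:thm:full-decomposition}. Since $T$ is a function of $S$ and $B$ is a function of $T$,
\[
I(S;\widehat S)\ge I(T;\widehat S)=I(B;\widehat S)+I(T;\widehat S\mid B)\ge P_A\, I(S;\widehat S\mid B=1),
\]
where the last step uses that $T$ is constant on $\{B=0\}$. Disjointness of the sets $\mathcal R_0(a)$ for $a\in A$ defines a map $g$ on $\bigcup_{a\in A}\mathcal R_0(a)$ with $g(\widehat S)=S$ almost surely on $\{B=1\}$. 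Therefore $H(S\mid\widehat S,B=1)=0$, and so $I(S;\widehat S\mid B=1)=H(S\mid B=1)=H(\pi_A)$.

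The main obstacle is this identifiability step, specifically handling the conditioning correctly. The map $g$ must be applied only on the event $B=1$, where $\widehat S$ lies in the union of the core reconstruction sets. Outputs produced from redundant states may fall into the same sets, which is why the argument conditions on $B$ rather than working with $\widehat S$ globally. Here $P_A>0$ guarantees that $\pi_A$ is well defined. Combining the two bounds gives $R_{\mathrm{V2X}}(0)=P_AH(\pi_A)$.
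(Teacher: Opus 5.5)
Your proposal is correct and follows the paper's proof. You use the same achievability channel: identity on $A$ and an independent $\pi_A$-draw on $J$, which gives output marginal $\pi_A$ and $I(S;\widehat S)=P_AH(\pi_A)$. Your converse is also the same, running through the auxiliary variables $B,T$, with pairwise disjointness of the $\mathcal R_0(a)$ giving deterministic recovery of the core symbol on $\{B=1\}$. One step is worth keeping as you wrote it. You check directly that outputs in $A$ give zero distortion on redundant states, instead of citing the redundant-statement lemma. This makes explicit a step the paper leaves implicit, and it is needed here because this theorem does not assume $\widehat{\mathcal S}\subseteq\operatorname{Cn}_{\mathcal E}(\mathcal S)$.
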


\begin{proof}
For achievability, set $\widehat S=S$ on $A$ and draw
$\widehat S\sim\pi_A$ independently on $J$. The output marginal is $\pi_A$,
and
\[
H(\widehat S)=H(\pi_A),\qquad
H(\widehat S\mid S)=P_JH(\pi_A),
\]
so $I(S;\widehat S)=P_AH(\pi_A)$.

For the converse, zero distortion requires
$\widehat S\in\mathcal R_0(a)$ when $S=a\in A$. Pairwise disjointness allows
deterministic recovery of $a$ from $\widehat S$ on the core event. Thus
$I(A_o;\widehat S\mid B=1)=H(\pi_A)$, and the auxiliary-variable argument
above gives $I(S;\widehat S)\ge P_AH(\pi_A)$.
\end{proof}

The disjointness condition is required for the exact zero-distortion
expression, not for the lossy decomposition. Neither result identifies the
prototype's analytical byte payload with the deductive rate.

\section{Bounded-Depth and Heterogeneous-Receiver Extensions}
\label{app:extensions}

\subsection{Bounded-Depth Extension}
\label{app:theory-depth}

Let $T_{\mathrm{PS}}$ be a computable, monotone, and extensive operator whose
nested iterates stabilize finitely and satisfy
\[
\operatorname{Cn}_{\mathcal E}(B)
=\bigcup_{n\ge0}T_{\mathrm{PS}}^n(B).
\]
For a nonnegative integer $\delta$, define
\[
d_{\mathrm{V2X}}^{(\delta)}(s_0,\hat s\mid\mathcal S)
=
\begin{cases}
0, &
\mathcal S\subseteq
T_{\mathrm{PS}}^\delta(
(\mathcal S\setminus\{s_0\})\cup\{\hat s\}),\\
1, & \text{otherwise},
\end{cases}
\]
and
\[
A_\delta
=
\{s\in\mathcal S:
s\notin T_{\mathrm{PS}}^\delta(\mathcal S\setminus\{s\})\}.
\]
Let $P_\delta=P_S(A_\delta)$ and let $\pi_\delta$ be the conditional
distribution when $P_\delta>0$. Define
$R_{\mathrm{V2X}}(D,\delta)$ and
$R^{(A_\delta,\delta)}(D')$ using
$d_{\mathrm{V2X}}^{(\delta)}$.

\begin{theorem}[Rate--Depth--Distortion Characterization]
\label{app:thm:depth-decomposition}
For every nonnegative integer $\delta$ and every $D\ge0$,
\[
R_{\mathrm{V2X}}(D,\delta)
=
P_\delta
R^{(A_\delta,\delta)}\!\left(\frac{D}{P_\delta}\right),
\]
with value $0$ when $P_\delta=0$. If $P_\delta>0$,
$A_\delta\subseteq\widehat{\mathcal S}$, and distinct depth-core statements
have disjoint zero-distortion reconstruction sets, then
\[
R_{\mathrm{V2X}}(0,\delta)=P_\delta H(\pi_\delta).
\]
\end{theorem}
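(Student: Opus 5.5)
The plan is to rerun the proof of Theorem~\ref{app:thm:full-decomposition} with $d_{\mathrm{V2X}}$ replaced by $d^{(\delta)}_{\mathrm{V2X}}$, $A$ by $A_\delta$, and $J$ by $J_\delta\triangleq\mathcal S\setminus A_\delta$. The only new ingredient is a depth analogue of Lemma~\ref{app:lem:redundant-free}.

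\textbf{Depth-redundant statements are distortion-free.} For $j\in J_\delta$ we have $j\in T_{\mathrm{PS}}^\delta(\mathcal S\setminus\{j\})$. Since $T_{\mathrm{PS}}$ is monotone, so is $T_{\mathrm{PS}}^\delta$, and therefore
\[
j\in T_{\mathrm{PS}}^\delta\bigl((\mathcal S\setminus\{j\})\cup\{\hat s\}\bigr)
\quad\text{for every }\hat s.
\]
Since $T_{\mathrm{PS}}$ is extensive, so is $T_{\mathrm{PS}}^\delta$, which gives $\mathcal S\setminus\{j\}\subseteq T_{\mathrm{PS}}^\delta((\mathcal S\setminus\{j\})\cup\{\hat s\})$. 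Together these yield $d^{(\delta)}_{\mathrm{V2X}}(j,\hat s\mid\mathcal S)=0$ for every $\hat s\in\widehat{\mathcal S}$. This is why, unlike Theorem~\ref{app:thm:full-decomposition}, no containment of the reconstruction alphabet in the closure is needed. It follows that
\[
\mathbb E\bigl[d^{(\delta)}_{\mathrm{V2X}}(S,\widehat S\mid\mathcal S)\bigr]
=P_\delta\,\mathbb E\bigl[d^{(\delta)}_{\mathrm{V2X}}(S,\widehat S\mid\mathcal S)\mid S\in A_\delta\bigr].
\]
When $P_\delta=0$, this already gives zero rate: take a constant $\widehat S$, which has zero distortion.

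\textbf{Converse.} Assume $P_\delta>0$, and set $B=\mathbb I[S\in A_\delta]$ and $T=S$ on $A_\delta$, $T=\bot$ otherwise. Because $T$ is a function of $S$, the chain rule and nonnegativity of mutual information give
\[
I(S;\widehat S)\ge I(T;\widehat S)\ge I(T;\widehat S\mid B)\ge P_\delta\, I(A_o;\widehat S\mid B=1),
\]
where $A_o\sim\pi_\delta$. The channel restricted to the event $\{B=1\}$ has conditional expected distortion at most $D/P_\delta$, so it is feasible for $R^{(A_\delta,\delta)}(D/P_\delta)$. If $D/P_\delta\ge1$, that value is $0$, since the distortion is binary.

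\textbf{Achievability.} Take an optimal core channel on $A_\delta$, and on $J_\delta$ output independently from that channel's output marginal. The distortion contributed on $J_\delta$ is zero by the lemma. The output marginal does not depend on $B$, so $I(B;\widehat S)=0$. Given $B=0$, the output is independent of $S$, so $I(S;\widehat S)=P_\delta I(A_o;\widehat S\mid B=1)$.

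\textbf{Zero distortion.} Assume $A_\delta\subseteq\widehat{\mathcal S}$. Choosing $\widehat S=S$ on $A_\delta$ gives distortion zero, because $\mathcal S\subseteq T_{\mathrm{PS}}^\delta(\mathcal S)$ by extensivity; on $J_\delta$, draw $\widehat S\sim\pi_\delta$ independently. This yields rate $P_\delta H(\pi_\delta)$, exactly as in Theorem~\ref{app:thm:zero-distortion}. For the converse, zero distortion forces $\widehat S\in\mathcal R_0^{(\delta)}(a)$ whenever $S=a\in A_\delta$. Disjointness of these sets makes $a$ a deterministic function of $\widehat S$ on $\{B=1\}$, so $I(A_o;\widehat S\mid B=1)=H(\pi_\delta)$, and the converse chain above gives $I(S;\widehat S)\ge P_\delta H(\pi_\delta)$.

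\textbf{Main obstacle.} I expect the delicate point to be the converse's passage from the full channel to a core channel feasible at level $D/P_\delta$. It must be checked that conditioning on $\{B=1\}$ preserves the distortion budget, which holds only because redundant states contribute zero. It must also be confirmed that the chain-rule steps use only that $T$ is a function of $S$.
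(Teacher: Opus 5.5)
Your proposal is correct and follows the paper's own route: show that every statement outside $A_\delta$ incurs zero $d^{(\delta)}_{\mathrm{V2X}}$ for every reconstruction, then reuse the auxiliary-variable converse, the output-marginal achievability, and the deterministic-decoding argument from the zero-distortion theorem. Your write-up is more explicit than the paper's in two places. First, the paper cites only monotonicity, whereas you correctly also invoke extensivity to keep $\mathcal S\setminus\{j\}$ inside $T_{\mathrm{PS}}^\delta(\cdot)$. Second, you explain why no alphabet-containment hypothesis is needed. One small correction: the step $I(T;\widehat S)\ge I(T;\widehat S\mid B)$ holds because $B=\mathbb I[T\neq\bot]$ is a function of $T$, not because $T$ is a function of $S$.
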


\begin{proof}
If $j\notin A_\delta$, then
$j\in T_{\mathrm{PS}}^\delta(\mathcal S\setminus\{j\})$. Monotonicity makes
$d_{\mathrm{V2X}}^{(\delta)}(j,\hat s\mid\mathcal S)=0$ for every
reconstruction. The distortion constraint therefore reduces to $A_\delta$,
and the converse and output-marginal extension above prove the first equality.
Representability and disjointness give the zero-distortion expression by the
same achievability and decoding arguments as
Theorem~\ref{app:thm:zero-distortion}.
\end{proof}

\begin{corollary}[Endpoints and Monotonicity]
\label{app:cor:depth-endpoints}
Assume the zero-distortion conditions hold at the evaluated depths. Then
$A_{\delta+1}\subseteq A_\delta$ and
$R_{\mathrm{V2X}}(0,\delta)$ is non-increasing. At $\delta=0$, the fidelity
criterion reduces to symbol identity. If
\[
\operatorname{Atom}(\mathcal S)
=
\operatorname{Ess}(\mathcal S)
\triangleq
\{s\in\mathcal S:
s\notin\operatorname{Cn}_{\mathcal E}
(\mathcal S\setminus\{s\})\},
\]
define the maximum intrinsic derivation depth
\[
D_{\mathcal S}
=
\max_{s\in\mathcal S\setminus A}
\min\{n\ge 0:
s\in T_{\mathrm{PS}}^n(\mathcal S\setminus\{s\})\},
\]
with $D_{\mathcal S}=0$ when $\mathcal S\setminus A=\emptyset$. Then
$A_\delta=A$ for every $\delta\ge D_{\mathcal S}$.
\end{corollary}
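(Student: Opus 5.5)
The corollary has four claims: nesting of the depth cores, monotonicity of the zero-distortion rate, the $\delta=0$ endpoint, and stabilization. I would prove them in that order.

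\textbf{Nesting and the $\delta=0$ endpoint.} Since $T_{\mathrm{PS}}$ is extensive and monotone, its iterates are nested: $T_{\mathrm{PS}}^\delta(B)\subseteq T_{\mathrm{PS}}^{\delta+1}(B)$ for every $B$. Take $s\in A_{\delta+1}$, so $s\notin T_{\mathrm{PS}}^{\delta+1}(\mathcal S\setminus\{s\})$. By nesting, also $s\notin T_{\mathrm{PS}}^{\delta}(\mathcal S\setminus\{s\})$, hence $s\in A_\delta$ and $A_{\delta+1}\subseteq A_\delta$.

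For $\delta=0$, $T_{\mathrm{PS}}^0$ is the identity. Then $d^{(0)}_{\mathrm{V2X}}(s_0,\hat s)=0$ iff $\mathcal S\subseteq(\mathcal S\setminus\{s_0\})\cup\{\hat s\}$, which forces $\hat s=s_0$. So the fidelity criterion is symbol identity, and $A_0=\mathcal S$.

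\textbf{Monotonicity of the rate.} By Theorem~\ref{app:thm:depth-decomposition}, $R_{\mathrm{V2X}}(0,\delta)=P_\delta H(\pi_\delta)$. Nesting gives $P_{\delta+1}\le P_\delta$. I would not compare entropies directly. Instead, for $C\subseteq A$ with $P_S(C)>0$, write $P_S(C)H(\pi_C)=\sum_{s\in C}P_S(s)\log\frac{P_S(C)}{P_S(s)}$. Each summand is nonnegative, and each increases when $C$ grows to a superset, since $P_S(C)$ increases. Adding the new terms, which are also nonnegative, gives $P_{\delta+1}H(\pi_{\delta+1})\le P_\delta H(\pi_\delta)$. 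If $P_{\delta+1}=0$, the rate is $0$ and the inequality is trivial.

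The real obstacle is that the zero-distortion formula needs the representability and disjointness hypotheses at both depths. The corollary's phrase ``conditions hold at the evaluated depths'' supplies exactly this, so I would state that reliance explicitly.

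\textbf{Stabilization.} Assume $\operatorname{Atom}=\operatorname{Ess}$. First show $A_\delta\supseteq A$ for every $\delta$. If $s\in A=\operatorname{Ess}$, then $s\notin\operatorname{Cn}_{\mathcal E}(\mathcal S\setminus\{s\})=\bigcup_n T^n_{\mathrm{PS}}(\mathcal S\setminus\{s\})$, so $s$ is in no iterate and $s\in A_\delta$.

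Conversely, let $s\in\mathcal S\setminus A$ and $\delta\ge D_{\mathcal S}$. Because $s\notin\operatorname{Ess}$, $s\in\operatorname{Cn}_{\mathcal E}(\mathcal S\setminus\{s\})$, so its minimal depth $n_s$ is finite and $n_s\le D_{\mathcal S}\le\delta$. Nesting then gives $s\in T^\delta_{\mathrm{PS}}(\mathcal S\setminus\{s\})$, so $s\notin A_\delta$. Together, $A_\delta=A$.

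The hypothesis $\operatorname{Atom}=\operatorname{Ess}$ is the key point here. It ensures every non-core statement is derivable from its complement, so the maximum defining $D_{\mathcal S}$ is over finite values. Without it, the canonical deletion order could delete a statement that is only derivable from a set of statements that were themselves later removed.
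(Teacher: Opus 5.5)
Your proof is correct and follows the paper's route. Nested iterates give $A_{\delta+1}\subseteq A_\delta$. Your closed form $P_S(C)H(\pi_C)=\sum_{s\in C}P_S(s)\log\frac{P_S(C)}{P_S(s)}$ is the same computation as the paper's one-symbol increment $q\log\frac{q+p}{q}+p\log\frac{q+p}{p}$, since the difference of two closed forms is exactly that increment. For the endpoints you use $T_{\mathrm{PS}}^0=\mathrm{id}$, and for stabilization you use $\operatorname{Cn}_{\mathcal E}=\bigcup_n T_{\mathrm{PS}}^n$ together with $\operatorname{Atom}=\operatorname{Ess}$; these are the claims the paper only cites, written out in full. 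One small fix: state the identity for $C\subseteq\mathcal S$ rather than $C\subseteq A$, since in general $A_\delta\supseteq A$ and $A_0=\mathcal S$.

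Your closing remark, however, misattributes the role of $\operatorname{Atom}(\mathcal S)=\operatorname{Ess}(\mathcal S)$. A statement deleted while the retained set is $B\subseteq\mathcal S$ satisfies $s\in\operatorname{Cn}_{\mathcal E}(B\setminus\{s\})\subseteq\operatorname{Cn}_{\mathcal E}(\mathcal S\setminus\{s\})$. So $\operatorname{Ess}\subseteq\operatorname{Atom}$ and the finiteness of $D_{\mathcal S}$ hold unconditionally. The hypothesis is needed only for $A\subseteq A_\delta$, which is exactly where your argument uses it. Without it, a retained statement can be derivable from its complement, for example the survivor of two mutually derivable statements. Such a statement leaves $A_\delta$ for large $\delta$, so $A_\delta$ stabilizes at $\operatorname{Ess}(\mathcal S)\neq A$.
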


\begin{proof}
Nested iterates imply $A_{\delta+1}\subseteq A_\delta$. For finite $U$, define
$F(U)=P_S(U)H(P_S(\cdot\mid U))$, with $F(U)=0$ at zero mass. Adding a symbol
of mass $p$ to a set of mass $q$ changes $F$ by
\[
q\log\frac{q+p}{q}+p\log\frac{q+p}{p}\ge0.
\]
This expression applies directly when $p,q>0$; the cases $p=0$ or $q=0$
follow by continuity and give zero change.
Thus shrinking $A_\delta$ cannot increase $P_\delta H(\pi_\delta)$. The
remaining endpoint claims follow from $T_{\mathrm{PS}}^0(B)=B$, finite
stabilization, and the stated order-robustness condition.
\end{proof}

The theorem concerns an optimal decoder allowed at most $\delta$ symbolic
steps. The deployed DIM executes a fixed number of learned residual updates
and can accumulate approximation error, so neural accuracy need not be
monotonic in $\delta$.

\subsection{Heterogeneous-Receiver Extension}
\label{app:theory-heterogeneous}

For a receiver vocabulary $V\subseteq\mathbb S_r$, define
\[
R_{\mathrm{V2X}}^{(V)}(0)
=
\min_{P_{\widehat S|S}:\,
\widehat S\in V,\,
\mathbb E[d_{\mathrm{V2X}}(S,\widehat S\mid\mathcal S)]=0}
I(S;\widehat S)
\]
and
\[
\mathcal R_0^{(V)}(a)
=
\{\hat s\in V:
d_{\mathrm{V2X}}(a,\hat s\mid\mathcal S)=0\}.
\]

\begin{theorem}[Heterogeneous-Receiver Extension]
\label{app:thm:heterogeneous}
Assume $P_A>0$. If
\begin{enumerate}
\item[(H1)] $\mathcal R_0^{(V)}(a)\neq\emptyset$ for every $a\in A$, and
\item[(H2)] $\mathcal R_0^{(V)}(a_1)\cap
\mathcal R_0^{(V)}(a_2)=\emptyset$ for all distinct $a_1,a_2\in A$,
\end{enumerate}
then
$R_{\mathrm{V2X}}^{(V)}(0)=P_AH(\pi_A)$. If a positive-probability core
statement has no representative, the zero-distortion rate is $\infty$.
\end{theorem}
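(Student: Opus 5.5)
We prove the Heterogeneous-Receiver Extension by repeating the achievability/converse template of Theorem~\ref{app:thm:zero-distortion}, now with reconstructions restricted to $V$. The key preliminary is that Lemma~\ref{app:lem:redundant-free} need not apply, since $V$ is not assumed to lie inside $\operatorname{Cn}_{\mathcal E}(\mathcal S)$. So we first pin down which $\hat s\in V$ are zero-distortion for a redundant $j\in J$.

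\emph{Step 1: redundant statements.} Fix $j\in J$. By Proposition~\ref{app:prop:core-properties} and monotonicity, $\operatorname{Cn}_{\mathcal E}(\mathcal S\setminus\{j\})=\operatorname{Cn}_{\mathcal E}(\mathcal S)$. Hence, for $\hat s\in V$, monotonicity and idempotence give
\[
d_{\mathrm{V2X}}(j,\hat s\mid\mathcal S)=0
\iff \hat s\in\operatorname{Cn}_{\mathcal E}(\mathcal S).
\]
Now take any $a\in A$ and $\hat s\in\mathcal R_0^{(V)}(a)$. Zero distortion means
\[
\operatorname{Cn}_{\mathcal E}((\mathcal S\setminus\{a\})\cup\{\hat s\})=\operatorname{Cn}_{\mathcal E}(\mathcal S),
\]
and reflexivity then places $\hat s$ in $\operatorname{Cn}_{\mathcal E}(\mathcal S)$. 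Consequently every core representative is also a zero-distortion reconstruction for every $j\in J$. This is the point where the closure-equivalent representability hidden in (H1) does the work, and it is the step I expect to need the most care.

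\emph{Step 2: achievability.} Under (H1), choose a representative $r(a)\in\mathcal R_0^{(V)}(a)$ for each $a\in A$; by (H2) these are distinct. Define the channel by $\widehat S=r(S)$ on $A$, and on $J$ let $\widehat S=r(A')$ with $A'\sim\pi_A$ drawn independently of $S$. By Step~1 the expected distortion is zero. The output marginal is the pushforward of $\pi_A$ under the injective map $r$, and on $J$ the output is independent of $S$. The same computation as in Theorem~\ref{app:thm:zero-distortion} then gives
\[
H(\widehat S)=H(\pi_A),\qquad H(\widehat S\mid S)=P_JH(\pi_A),
\]
so $I(S;\widehat S)=P_AH(\pi_A)$.

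\emph{Step 3: converse.} Any feasible channel must satisfy $\widehat S\in\mathcal R_0^{(V)}(a)$ almost surely whenever $S=a\in A$ has positive probability. By (H2), $a$ is therefore a deterministic function of $\widehat S$ on the core event, so $I(A_o;\widehat S\mid B=1)=H(\pi_A)$. Applying the auxiliary-variable chain from the proof of Theorem~\ref{app:thm:full-decomposition} with $T$ and $B$ yields
\[
I(S;\widehat S)\ge P_AH(\pi_A),
\]
which matches Step~2.

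\emph{Step 4: missing representative.} Suppose some $a\in A$ with $P_S(a)>0$ has $\mathcal R_0^{(V)}(a)=\emptyset$. Then every $\hat s\in V$ gives $d_{\mathrm{V2X}}(a,\hat s\mid\mathcal S)>0$, so the expected distortion is at least $P_S(a)\min_{\hat s\in V}d_{\mathrm{V2X}}(a,\hat s\mid\mathcal S)>0$. The minimum is strictly positive because $V$ is finite. The feasible set is therefore empty, and by the convention $\min\emptyset=\infty$ we get $R_{\mathrm{V2X}}^{(V)}(0)=\infty$.
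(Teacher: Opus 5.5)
Your proposal is correct and follows the paper's proof essentially step for step. You choose representatives via (H1) and get injectivity from (H2); closure equality plus reflexivity places them in $\operatorname{Cn}_{\mathcal E}(\mathcal S)$, so the output-marginal extension on $J$ incurs no distortion. Deterministic recovery of the core identity then gives the converse, and the feasible set is empty when a positive-probability core statement has no representative. Your Step~1 just makes explicit what the paper does implicitly when it applies the redundant-statements lemma to the image of the representative map rather than to all of $V$.
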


\begin{proof}
Choose $\phi(a)\in\mathcal R_0^{(V)}(a)$ under (H1); (H2) makes $\phi$
injective. Closure equality and reflexivity imply
$\phi(a)\in\operatorname{Cn}_{\mathcal E}(\mathcal S)$. Use
$\widehat S=\phi(S)$ on $A$ and its output marginal on $J$.
Lemma~\ref{app:lem:redundant-free} gives zero distortion and rate at most
$P_AH(\pi_A)$. Conversely, disjoint reconstruction sets permit deterministic
recovery of the core identity, giving the matching lower bound. If a
positive-probability core statement has no representative, the feasible set
is empty.
\end{proof}

\begin{remark}[Relation to the Neural Prototype]
This extension is purely theoretical. SemRD-V2X neither implements
receiver-specific symbolic vocabularies nor tests (H1)--(H2). More generally,
the theory motivates budgeted support and bounded reconstruction, but the
prototype does not compute $A$, $P_A$, $H(\pi_A)$, or
$R_{\mathrm{V2X}}(D)$, and its analytical payload is not the abstract rate.
\end{remark}

\section{Practical Implementation}
\label{app:impl}

This section records implementation details that complement
Section~\ref{sec:methodology}. SemRD-V2X uses the public V2X-ViT-v1 codebase,
an exact inference-time remote support, a pointwise channel codec, and bounded
shared-weight refinement. It does not execute a symbolic proof system, compute
the deductive core, measure closure distortion, or estimate the abstract rate.
Table~\ref{tab:theory-prototype} makes the resulting scope explicit.

\begin{table*}[t]
\centering
\small
\begin{tabular}{@{}p{0.19\textwidth}p{0.25\textwidth}p{0.47\textwidth}@{}}
\toprule
Deductive source quantity & SemRD-V2X realization & Scope of correspondence \\
\midrule
Remote statement $s$ & Remote BEV embedding $\mathbf{F}_i[:,h,w]$ & No explicit finite-symbol quantizer \\
Ego context $\mathcal E$ & Complete local ego feature & Available without communication \\
Context-conditioned core $A$ & Inference-time support $\widehat A_i(\rho)$ & Exact-size task-trained Top-$k$, not exact deletion \\
Core mass $P_A=P_S(A)$ & Spatial budget $\rho$ & Externally configured, not estimated \\
Bounded inference depth $\delta$ & $\delta$ DIM steps & Shared-weight local refinement, not symbolic proof iteration \\
Closure fidelity & AP and post-hoc PCF & Detector-consequence diagnostics, not closure measurements \\
Core-symbol fidelity & Post-hoc support error $E_{\mathrm{sup}}$ & Evaluation diagnostic on the hard selected support \\
Redundant-part recovery & $\mathcal L_{\mathrm{rec}}$ and $E_{\mathrm{omit}}$ & Smooth-$L_1$ training proxy and post-hoc feature error \\
$H(\pi_A)$ & Score-entropy loss $\mathcal L_{\mathrm{rate}}$ & Ranking regularizer, not source entropy \\
Information rate $R(D,\delta)$ & Analytical remote payload & Excludes packet/channel coding \\
\bottomrule
\end{tabular}
\caption{Theory--prototype correspondence. The right column states the
boundary of each practical approximation.}
\label{tab:theory-prototype}
\end{table*}

\subsection{Relaxed Training and Exact-Budget Support Selection}
\label{app:impl-core}

The exact core in Definition~\ref{app:def:perceptual-core} requires an
explicit finite proof system and repeated derivability tests. Neither object
is available for continuous BEV tensors. We therefore learn a task-aware
ranking of remote spatial positions rather than claiming to recover exact
membership in $A=\operatorname{Atom}(\mathcal S)$. For remote agent $i$, a
lightweight shared scorer produces
\[
\alpha_i=f_{\mathrm{score}}(\mathbf F_i),
\qquad \alpha_i\in\mathbb R^{H\times W}.
\]
An externally specified budget $\rho\in(0,1]$ fixes
\[
k=\max\{1,\lfloor\rho HW\rfloor\}.
\]
At inference, the Top-$k$ indices of $\alpha_i$ define
$\widehat A_i(\rho)$ and the hard binary mask $\mathbf M_i^\rho$. The index
selection returns exactly $k$ positions.

Training instead uses a continuous Gumbel-threshold relaxation. With i.i.d.
standard Gumbel noise $\mathbf g_i$,
\[
\boldsymbol\beta_i=\alpha_i+\mathbf g_i,
\qquad
\kappa_i^{\mathrm{tr}}=\operatorname{kth}(\boldsymbol\beta_i,k),
\qquad
\mathbf S_i(h,w)=
\sigma\!\left(
\frac{\boldsymbol\beta_i(h,w)-\kappa_i^{\mathrm{tr}}}{\tau}
\right).
\]
Here $\operatorname{kth}$ returns the $k$-th largest noisy score. The relaxed
mask $\mathbf S_i$ is used in the training forward pass, while the
hard mask $\mathbf M_i^\rho$ is used at inference. Thus the analytical payload
and exact-cardinality claims apply to inference; the training mask is an
optimization surrogate and need not have mean exactly $\rho$.

The selector is applied independently to every remote agent before fusion,
and the ego feature bypasses it. The configured fraction $\rho$ should not be
identified with the theoretical core probability $P_A=P_S(A)$. Cross-agent
redundancy can influence the scores through the downstream task and
reconstruction objective, but the selector itself does not execute cross-agent
derivability tests.

\subsection{Remote Feature Codec}
\label{app:impl-codec}

For each non-ego agent, a pointwise encoder maps the 256-channel feature at
every location to a 64-channel latent vector. Support selection is performed
from the full sender-side feature, after which only selected latent vectors
are retained in the forward path:
\[
\mathbf Z_i=Q_b\!\left(g_{\mathrm{enc}}(\mathbf F_i)\right),\qquad
\mathbf X_i^{(0)}
=g_{\mathrm{dec}}\!\left(\widetilde{\mathbf M}_i\odot\mathbf Z_i\right),
\]
where $g_{\mathrm{enc}}$ and $g_{\mathrm{dec}}$ are bias-free $1\times1$
convolutions and $Q_b$ models the configured transport precision. The reported
setting uses $b=16$ and does not assume an additional entropy coder. The ego
feature bypasses both projections. A one-bit-per-position bitmap identifies
the selected BEV support for each remote agent. Because the codec is pointwise,
it changes the channel representation without spatially mixing selected and
omitted locations before receiver-side inference. The encoder is initialized
as an orthogonal rank-64 projection and the decoder with its transpose; both
remain trainable.

\subsection{Differentiable Inference Module}
\label{app:impl-inference}

At inference, the selected latent values are placed on the shared BEV grid and
decoded into a dense, zero-filled remote feature map $\mathbf X_i^{(0)}$;
during training, the same latent tensor is weighted by the relaxed mask. The
differentiable inference module (DIM) repeatedly applies one shared-weight,
mask-aware local block. One iteration computes
\begin{align*}
\mathbf P_i^{(t)}
&=W_p\,\mathrm{GELU}\!\left(\mathrm{GN}(W_d*\mathbf X_i^{(t)})\right),\\
\mathbf G_i^{(t)}
&=\sigma\!\left(W_g*[\operatorname{mean}_c|\mathbf X_i^{(t)}|,
\widetilde{\mathbf M}_i]\right),\\
\mathbf X_i^{(t+1)}
&=\mathbf X_i^{(0)}
+(1-\widetilde{\mathbf M}_i)\odot
\left(\mathbf X_i^{(t)}+\mathbf G_i^{(t)}\odot\mathbf P_i^{(t)}\right).
\end{align*}
Here $W_d$ is a depthwise $3\times3$ convolution, $W_p$ is a pointwise
channel mixer, and $W_g$ predicts a spatial gate from feature energy and the
selection mask. Here $\widetilde{\mathbf M}_i=\mathbf S_i$ during training and
$\widetilde{\mathbf M}_i=\mathbf M_i^\rho$ during inference. The hard
inference mask re-anchors transmitted support after every step, so only omitted
locations are refined at test time. The evaluated block uses eight GroupNorm
groups and initializes the gate bias to $-1$ for conservative propagation.
Cross-agent and pose-aware processing remains in the downstream V2X-ViT-v1
fusion network.

This operation is a local neural surrogate for bounded receiver-side
inference; it does not execute the abstract proof rules or guarantee symbolic
closure equivalence. All iterations share
parameters, so varying $\delta$ changes the effective receptive field and
decoder computation without changing the number of learned parameters. With
a $3\times3$ kernel, one iteration propagates information by at most one BEV
cell in Chebyshev distance. Repeated application may also propagate local
errors or oversmooth omitted features, so the abstract monotonicity of a true
closure operator is not claimed for the finite neural module.

\subsection{Joint Training Objective}
\label{app:impl-training}

The reported checkpoint jointly trains the selector, codec, DIM, and detector
with the implemented objective
\[
\mathcal L=\mathcal L_{\mathrm{det}}
+\lambda_{\mathrm{rate}}\mathcal L_{\mathrm{rate}}
+\lambda_{\mathrm{rec}}\mathcal L_{\mathrm{rec}}.
\]
No full-message teacher branch or closure-consistency loss is used during
training.

The rate term concentrates selector scores. With
$\mathbf q_i=\operatorname{softmax}(\alpha_i/\tau)$,
\[
\mathcal L_{\mathrm{rate}}
=\frac{\rho}{N_r}\sum_{i\in\mathcal A_r}
\frac{H(\mathbf q_i)}{\log(HW)}.
\]
Because inference uses exact Top-$k$, this term changes the learned ranking but
not the configured payload. Its entropy is not an estimate of $H(\pi_A)$.

The reconstruction term uses the relaxed training mask. Let
$\mathbf W_i=\operatorname{sg}(1-\mathbf S_i)$; then
\[
\begin{aligned}
\mathcal L_{\mathrm{rec}}
&=\frac{1}{Z_{\mathrm{rec}}}
\sum_{i\in\mathcal A_r}\sum_{c,h,w}\mathbf W_i(h,w)\,
\ell_{\mathrm{SL1}}\!\left(
\mathbf X_i^{(\delta)}(c,h,w),
\operatorname{sg}(\mathbf F_i(c,h,w))\right),\\
Z_{\mathrm{rec}}
&=\max\!\left\{
C\sum_{i\in\mathcal A_r}\sum_{h,w}\mathbf W_i(h,w),1
\right\}.
\end{aligned}
\]
Here $\ell_{\mathrm{SL1}}$ is elementwise Smooth-$L_1$. The detached weights
focus supervision on content omitted by the current relaxed mask without
letting this loss alter the measured omitted area. At the default operating
point, $\rho=0.3$, $\delta=2$, $C=256$, $C_z=64$, and $b=16$. The Gumbel
temperature is linearly annealed from $5.0$ to $0.5$ across 30 epochs, and
$\lambda_{\mathrm{rate}}=\lambda_{\mathrm{rec}}=0.05$. Training duration and
evaluation overrides are specified in Section~\ref{sec:exp}; code-level seed
assignments are recorded in Appendix~\ref{app:eval-implementation}.

\subsection{Operational Payload Accounting}
\label{app:impl-payload}

For each remote agent, the analytical transmitted bytes are
\[
|\widehat A_i(\rho)|\frac{C_zb}{8}
+\mathbb I[\rho<1]\frac{HWb_{\mathrm{mask}}}{8},
\]
where $C_z=64$, $b=16$, and $b_{\mathrm{mask}}=1$ in the evaluated
configuration. We report the sum over valid remote agents divided by the
number of frames. The first term counts selected FP16 latent values; the
second counts a dense selection bitmap. This model excludes packet headers,
index compression, entropy coding, channel coding, retransmissions, and
wireless channel effects. At $\rho=1$, the bitmap is omitted, but the
64-channel codec remains active; this setting is therefore not equivalent to
the original 256-channel V2X-ViT-v1 payload.

The implementation zeroes omitted latent positions in a dense simulation
tensor, decodes the tensor back to 256 channels, and performs dense DIM and
fusion operations. Consequently, the measured payload reduction should not
be interpreted as a proportional reduction in GPU arithmetic or end-to-end
latency.

\subsection{Heterogeneous Receiver Adaptation}
\label{app:impl-heterogeneous}

The heterogeneous-receiver result motivates a possible extension with a
shared abstract core space and type-specific reconstruction mappings. These
components are not implemented in the current prototype. Its optional
\texttt{vehicle\_only}/\texttt{infra\_only} switch merely removes source-agent
types and is an agent-source ablation, not a restricted-vocabulary receiver.
We therefore do not use this ablation as empirical validation of
Theorem~\ref{app:thm:heterogeneous}.

\subsection{Integration with Public V2X-ViT-v1}
\label{app:impl-integration}

The forward path keeps PointPillars extraction, V2X-ViT alignment,
transformer fusion, and the detection head unchanged. Each remote feature is
ranked, channel-compressed, and masked with the relaxed training support or
hard inference Top-$k$ support; the complete ego feature bypasses this path.
The receiver decodes the sparse remote tensor, applies $\delta$ shared-weight
masked refinements, and passes the reconstructed remote features to the
unchanged fusion and detection modules. Detection, score-concentration, and
omitted-region reconstruction losses train this pipeline jointly.

The runtime controls $\rho$ and $\delta$ expose different trade-offs:
$\rho$ changes analytical communication, whereas $\delta$ changes dense
receiver-side reconstruction at the same payload. The finite DIM may
oversmooth or propagate errors as depth increases, so no monotonic empirical
improvement with $\delta$ is claimed. This distinction is why the fixed-depth
spatial-budget sweep and the fixed-budget depth sweep are reported as
separate experiments.

\section{Reproducibility and Additional Results}
\label{app:additional-experiments}

\subsection{Evaluation Implementation Details}
\label{app:eval-implementation}

The public data loader interprets \texttt{xyz\_std} as the standard deviation
of independent Gaussian perturbations to the remote $x/y/z$ coordinates in
meters. It interprets \texttt{ryp\_std} in the pose-angle unit used by the
dataset (degrees); the implementation applies the yaw component, and
\texttt{transformation\_utils} converts pose angles with \texttt{np.radians}
when constructing transformation matrices. The primary checkpoint and the
controlled runs reported in the paper use seed 25 for training and inference;
the canonical Noisy configuration uses a 100-ms temporal offset and sets
\texttt{xyz\_std}=0.2 and \texttt{ryp\_std}=0.2.

For timed inference, both methods are run sequentially with PyTorch
2.4.1+cu121 on the same Tesla V100-PCIE-32GB, batch size one, and 50 warm-up
frames. Five paired repetitions are performed. Each sample is bracketed by
\texttt{torch.cuda.synchronize()}; the timer includes host-to-device transfer,
model forward, and post-processing, while excluding data loading and the
reconstruction diagnostic. The configured temporal offset changes the loaded
remote timestamp but does not insert a wall-clock wait. We compute the mean
over frames within each repetition; Table~\ref{tab:latency} reports the mean
and sample standard deviation of these five repetition-level means. AP and
analytical payload are deterministic because the same checkpoint and evaluation
set are reused across repetitions.

The post-hoc support and omitted-region statistics partition the evaluation
feature with the hard inference mask. They compare the pre-mask remote feature
with, respectively, the decoded selected support and the bounded DIM
reconstruction on unselected positions. These $L_1$ quantities and the
full/sparse prediction comparison are evaluation proxies rather than training
losses; none directly measures the symbolic closure operator. All payload
values reported in the paper are analytical estimates rather than serialized
network traces. The exact payload expression and excluded overheads are given
in Section~\ref{app:impl-payload}.

Tables~\ref{tab:compression-noisy-app}
and~\ref{tab:compression-perfect-app} provide the numeric values underlying
Figure~\ref{fig:compression-sweeps}. Both sweeps use the same epoch-30
checkpoint and fix the shared-weight decoder depth at $\delta=2$ while varying
the inference-time spatial budget $\rho$.

\begin{table}[t]
\centering
\small
\begin{tabular}{@{}ccccc@{}}
\toprule
$\rho$ & Selected frac. & MiB/frame$\downarrow$ & AP@0.5$\uparrow$ &
AP@0.7$\uparrow$ \\
\midrule
0.10 & 0.0999 & 0.1720 & 0.7726 & 0.5983 \\
0.20 & 0.1999 & 0.3426 & 0.8300 & 0.6445 \\
0.30 & 0.3000 & 0.5131 & 0.8445 & 0.6465 \\
0.50 & 0.5000 & 0.8543 & 0.8528 & 0.6492 \\
0.75 & 0.7500 & 1.2806 & \textbf{0.8535} & \textbf{0.6494} \\
1.00 & 1.0000 & 1.7052 & 0.8529 & 0.6491 \\
\bottomrule
\end{tabular}
\caption{Spatial-budget sweep under the V2XSet Noisy protocol at fixed
$\delta=2$. Payload is the analytical sum of remote transmissions per
cooperative frame.}
\label{tab:compression-noisy-app}
\end{table}

\begin{table}[t]
\centering
\small
\begin{tabular}{@{}ccccc@{}}
\toprule
$\rho$ & Selected frac. & MiB/frame$\downarrow$ & AP@0.5$\uparrow$ &
AP@0.7$\uparrow$ \\
\midrule
0.10 & 0.0999 & 0.1719 & 0.7913 & 0.6159 \\
0.20 & 0.1999 & 0.3422 & 0.8692 & 0.7429 \\
0.30 & 0.3000 & 0.5126 & 0.8976 & 0.7892 \\
0.50 & 0.5000 & 0.8534 & 0.9044 & 0.8069 \\
0.75 & 0.7500 & 1.2793 & \textbf{0.9050} & \textbf{0.8080} \\
1.00 & 1.0000 & 1.7035 & 0.9049 & 0.8080 \\
\bottomrule
\end{tabular}
\caption{Spatial-budget sweep under the V2XSet Perfect protocol at fixed
$\delta=2$. Payload is the analytical sum of remote transmissions per
cooperative frame.}
\label{tab:compression-perfect-app}
\end{table}

\subsection{Bandwidth Break-Even Analysis}
\label{app:bandwidth-scenarios}

Table~\ref{tab:latency} measures computation but does not physically measure
wireless transmission; its communication and total columns are analytical
scenarios. To examine whether the added computation can offset the payload
saving, let
\[
T_{\mathrm{comm}}(B,R)=\frac{8\cdot2^{20}B}{R},
\]
where $B$ is the analytical payload in MiB/frame and $R$ is the sustained
aggregate application-level goodput in bit/s. The controlled comparison gives
$B_{\mathrm{base}}=13.6416$ MiB/frame, $B_{\mathrm{SemRD}}=0.5131$
MiB/frame, and an additional mean compute time
$\Delta T_{\mathrm{comp}}=5.497$ ms/frame. The serialization-time saving is
therefore
\[
\Delta T_{\mathrm{comm}}(R)
=\frac{8\cdot2^{20}(13.6416-0.5131)}{R}.
\]
Equating this saving with $\Delta T_{\mathrm{comp}}$ yields the break-even
rate
\[
R^*=\frac{8\cdot2^{20}(13.6416-0.5131)}{0.005497}
\approx 20.0\ \text{Gbit/s}.
\]
Thus, under the stated sequential model, the serialization saving is larger
than the measured compute overhead for any $R<R^*$. The value $R^*$ is a
mathematical crossover, not an assumed operating rate for a V2X link.

For scale, at the 150-Mbit/s scenario used in Table~\ref{tab:latency}, the
ideal serialization times are 762.895 ms/frame for V2X-ViT-v1 and
28.697 ms/frame for SemRD-V2X; at 1 Gbit/s, they are 114.4 and 4.3 ms/frame.
Adding the measured mean per-repetition mean inference latency gives
illustrative sequential totals of 907.243 versus 178.542 ms/frame at
150 Mbit/s and 258.8 versus 154.1 ms/frame at 1 Gbit/s. These examples explain
why a modest increase in neural computation need not negate a much larger
reduction in transmitted bytes.

This analysis is deliberately limited. The variable $R$ denotes goodput
available to the aggregate cooperative feature stream, rather than a PHY peak
rate or a separate dedicated rate for each remote agent. The calculation uses
analytical feature payloads and ideal serialization; it excludes packet
headers, contention, channel coding, retransmissions, and other wireless
effects. It also assumes communication and computation are sequential,
whereas an implementation may overlap them. Accordingly, the calculation
identifies a payload--compute crossover under explicit assumptions; it is not
a physical network measurement or a claim of end-to-end wireless latency.

\fi

\bibliography{paper}

@misc{xu2026rate,
  author       = {Xu, Jianfeng},
  title        = {Rate-Distortion Theory for Deductive Sources under Closure Fidelity},
  year         = {2026},
  eprint       = {2604.15698},
  archivePrefix = {arXiv},
  primaryClass = {cs.IT},
  url          = {https://arxiv.org/abs/2604.15698}
}

@article{xu2024v2x,
  author  = {Xu, Runsheng and Chen, Chia-Ju and Tu, Zhengzhong and Yang, Ming-Hsuan},
  title   = {V2X-ViTv2: Improved Vision Transformers for Vehicle-to-Everything Cooperative Perception},
  journal = {IEEE Transactions on Pattern Analysis and Machine Intelligence},
  volume  = {47},
  number  = {1},
  pages   = {650--663},
  year    = {2025},
  month   = jan
}

@inproceedings{v2xvit2022,
  author    = {Xu, Runsheng and Xiang, Hao and Tu, Zhengzhong and Xia, Xin and Yang, Ming-Hsuan and Ma, Jiaqi},
  title     = {{V2X-ViT}: Vehicle-to-Everything Cooperative Perception with Vision Transformer},
  booktitle = {European Conference on Computer Vision (ECCV)},
  pages     = {107--124},
  year      = {2022}
}

@inproceedings{opv2v2021,
  author    = {Xu, Runsheng and Xiang, Hao and Han, Xin and Liu, Xiaqing and Huang, Tianwei and Qin, Hao and Liu, Mingyu and Ma, Jiaqi},
  title     = {{OPV2V}: An Open Benchmark Dataset and Fusion Pipeline for Perception with Vehicle-to-Vehicle Communication},
  booktitle = {IEEE International Conference on Robotics and Automation (ICRA)},
  pages     = {2583--2589},
  year      = {2021}
}

@inproceedings{dairv2x2022,
  title={Dair-v2x: A large-scale dataset for vehicle-infrastructure cooperative 3d object detection},
  author={Yu, Haibao and Luo, Yizhen and Shu, Mao and Huo, Yiyi and Yang, Zebang and Shi, Yifeng and Guo, Zhenglong and Li, Hanyu and Hu, Xing and Yuan, Jirui and Nie, Zaiqing},
  booktitle={Proceedings of the IEEE/CVF Conference on Computer Vision and Pattern Recognition},
  pages={21361--21370},
  year={2022}
}

@inproceedings{fcooper2019,
  author    = {Chen, Qi and Ma, Xiao and Tang, Sihai and Guo, Jingda and Yang, Qiaoqiao and Fu, Song},
  title     = {{F-Cooper}: Feature-based Cooperative Perception for Autonomous Vehicle Multi-Agent Systems},
  booktitle = {International Conference on Autonomous Agents and Multi-Agent Systems (AAMAS)},
  year      = {2019}
}

@inproceedings{v2vnet2020,
  author    = {Wang, Tsun-Hsuan and Manivasagam, Sivabalan and Liang, Manmohan and Yang, Bin and Zeng, Wei and Urtasun, Raquel},
  title     = {{V2VNet}: Vehicle-to-Vehicle Communication for Joint Perception and Prediction},
  booktitle = {European Conference on Computer Vision (ECCV)},
  pages     = {605--621},
  year      = {2020}
}

@article{disconet2021,
  title={Learning distilled collaboration graph for multi-agent perception},
  author={Li, Yiming and Ren, Shunli and Wu, Pengxiang and Chen, Siheng and Feng, Chen and Zhang, Wenjun},
  journal={Advances in Neural Information Processing Systems},
  volume={34},
  pages={29541--29552},
  year={2021}
}

@inproceedings{who2com2020,
  author    = {Liu, Yen-Cheng and Tian, Junjiao and Ma, Chih-Yao and Glaser, Nathan and Kuo, Chia-Wen and Kira, Zsolt},
  title     = {{Who2com}: Collaborative Perception via Learnable Handshake Communication},
  booktitle = {2020 IEEE International Conference on Robotics and Automation (ICRA)},
  pages     = {6876--6883},
  year      = {2020},
  doi       = {10.1109/ICRA40945.2020.9197364}
}

@inproceedings{when2com2020,
  author    = {Liu, Yen-Cheng and Tian, Junjiao and Glaser, Nathaniel and Kira, Zsolt},
  title     = {{When2com}: Multi-Agent Perception via Communication Graph Grouping},
  booktitle = {Proceedings of the IEEE/CVF Conference on Computer Vision and Pattern Recognition (CVPR)},
  pages     = {4105--4114},
  year      = {2020},
  doi       = {10.1109/CVPR42600.2020.00416}
}

@inproceedings{where2comm2022,
  author    = {Hu, Yue and Fang, Shuai and Lei, Zixing and Zhong, Yiqi and Chen, Siheng},
  title     = {{Where2Comm}: Communication-Efficient Collaborative Perception with Spatial Questioner},
  booktitle = {Neural Information Processing Systems (NeurIPS)},
  year      = {2022}
}

@inproceedings{how2comm2023,
  author    = {Lei, Zixing and Hu, Yue and Yang, Yi and Zhong, Yiqi and Chen, Siheng},
  title     = {{How2Comm}: Communication-Efficient and Collaboration-Pragmatic Multi-Agent Perception},
  booktitle = {ACM Multimedia (ACM MM)},
  year      = {2023}
}

@inproceedings{cobevt2022,
  title     = {{CoBEVT}: Cooperative Bird's Eye View Semantic Segmentation with Sparse Transformers},
  author    = {Xu, Runsheng and Tu, Zhengzhong and Xiang, Hao and Shao, Wei and Zhou, Bolei and Ma, Jiaqi},
  booktitle = {Proceedings of the 6th Conference on Robot Learning},
  year      = {2022}
}

@inproceedings{latencyaware2022,
  author    = {Lei, Zixing and Ren, Shunli and Hu, Yue and Zhang, Wenjun and Chen, Siheng},
  title     = {Latency-Aware Collaborative Perception},
  booktitle = {Computer Vision -- ECCV 2022},
  pages     = {316--332},
  year      = {2022},
  doi       = {10.1007/978-3-031-19824-3_19}
}

@inproceedings{heal2024,
  author    = {Lu, Yifan and Hu, Yue and Zhong, Yiqi and Wang, Dequan and Wang, Yanfeng and Chen, Siheng},
  title     = {An Extensible Framework for Open Heterogeneous Collaborative Perception},
  booktitle = {International Conference on Learning Representations (ICLR)},
  year      = {2024},
  url       = {https://openreview.net/forum?id=KkrDUGIASk}
}

@INPROCEEDINGS{edgev2x2021,
  author={Yu, Ruozhou and Yang, Dejun and Zhang, Hao},
  booktitle={2021 IEEE/ACM Symposium on Edge Computing (SEC)}, 
  title={Edge-Assisted Collaborative Perception in Autonomous Driving: A Reflection on Communication Design}, 
  year={2021},
  volume={},
  number={},
  pages={371-375},
  doi={10.1145/3453142.3491413}}

@inproceedings{robcoop2023,
  title={A cooperative perception system robust to localization errors},
  author={Song, Zhiying and Wen, Fuxi and Zhang, Hailiang and Li, Jun},
  booktitle={2023 IEEE Intelligent Vehicles Symposium (IV)},
  pages={1--6},
  year={2023},
  organization={IEEE}
}

@article{chiu2023selective,
  title   = {Selective Communication for Cooperative Perception in End-to-End Autonomous Driving},
  author  = {Chiu, Hsu-kuang and Smith, Stephen F.},
  journal = {arXiv preprint arXiv:2305.17181},
  year    = {2023},
  eprint  = {2305.17181},
  archivePrefix = {arXiv},
  primaryClass = {cs.RO}
}

@inproceedings{vaswani2017,
  author    = {Vaswani, Ashish and Shazeer, Noam and Parmar, Niki and Uszkoreit, Jakob and Jones, Llion and Gomez, Aidan N. and Kaiser, Lukasz and Polosukhin, Illia},
  title     = {Attention Is All You Need},
  booktitle = {Neural Information Processing Systems (NeurIPS)},
  year      = {2017}
}

@article{shannon1948,
  author  = {Shannon, Claude E.},
  title   = {A Mathematical Theory of Communication},
  journal = {Bell System Technical Journal},
  volume  = {27},
  number  = {3},
  pages   = {379--423},
  year    = {1948}
}

@book{cover2006,
  author    = {Cover, Thomas M. and Thomas, Joy A.},
  title     = {Elements of Information Theory},
  edition   = {2nd},
  publisher = {John Wiley \& Sons},
  year      = {2006}
}

@article{blahut1972,
  author  = {Blahut, Richard E.},
  title   = {Computation of Channel Capacity and Rate-Distortion Functions},
  journal = {IEEE Transactions on Information Theory},
  volume  = {18},
  number  = {4},
  pages   = {460--473},
  year    = {1972}
}

@article{arimoto1972,
  author  = {Arimoto, Suguru},
  title   = {An Algorithm for Computing the Capacity of Arbitrary Discrete Memoryless Channels},
  journal = {IEEE Transactions on Information Theory},
  volume  = {18},
  number  = {1},
  pages   = {14--20},
  year    = {1972}
}

@article{bourtsoulatze2019,
  author  = {Bourtsoulatze, Evgeny and Kurka, David Burth and G{\"u}nd{\"u}z, Deniz},
  title   = {Deep Joint Source-Channel Coding for Wireless Image Transmission},
  journal = {IEEE Transactions on Cognitive Communications and Networking},
  volume  = {5},
  number  = {3},
  pages   = {567--579},
  year    = {2019}
}

@article{xie2021deepsc,
  author  = {Xie, Huiqiang and Qin, Zhijin and Li, Geoffrey Ye and Juang, Biing-Hwang},
  title   = {Deep Learning Enabled Semantic Communication Systems},
  journal = {IEEE Transactions on Signal Processing},
  volume  = {69},
  pages   = {2663--2675},
  year    = {2021},
  doi     = {10.1109/TSP.2021.3071210}
}

@article{tishby1999,
  author  = {Tishby, Naftali and Pereira, Fernando C. and Bialek, William},
  title   = {The Information Bottleneck Method},
  journal = {Proceedings of the 37th Annual Allerton Conference on Communication, Control, and Computing},
  pages   = {368--377},
  year    = {1999}
}

@inproceedings{blau2019,
  title={Rethinking lossy compression: The rate-distortion-perception tradeoff},
  author={Blau, Yochai and Michaeli, Tomer},
  booktitle={International Conference on Machine Learning},
  pages={675--685},
  year={2019},
  organization={PMLR}
}

@article{slepian1973noiseless,
  author  = {Slepian, David and Wolf, Jack K.},
  title   = {Noiseless Coding of Correlated Information Sources},
  journal = {IEEE Transactions on Information Theory},
  volume  = {19},
  number  = {4},
  pages   = {471--480},
  year    = {1973},
  doi     = {10.1109/TIT.1973.1055037}
}

@article{wyner1976rate,
  author  = {Wyner, Aaron D. and Ziv, Jacob},
  title   = {The Rate-Distortion Function for Source Coding with Side Information at the Decoder},
  journal = {IEEE Transactions on Information Theory},
  volume  = {22},
  number  = {1},
  pages   = {1--10},
  year    = {1976},
  doi     = {10.1109/TIT.1976.1055508}
}

@inproceedings{codefilling2024,
  author    = {Hu, Yue and Peng, Juntong and Liu, Sifei and Ge, Junhao and Liu, Si and Chen, Siheng},
  title     = {Communication-Efficient Collaborative Perception via Information Filling with Codebook},
  booktitle = {Proceedings of the IEEE/CVF Conference on Computer Vision and Pattern Recognition (CVPR)},
  pages     = {15481--15490},
  year      = {2024},
  url       = {https://openaccess.thecvf.com/content/CVPR2024/html/Hu_Communication-Efficient_Collaborative_Perception_via_Information_Filling_with_Codebook_CVPR_2024_paper.html}
}

@inproceedings{pointpillars2019,
  title={Pointpillars: Fast encoders for object detection from point clouds},
  author={Lang, Alex H and Vora, Sourabh and Caesar, Holger and Zhou, Lubing and Yang, Jiong and Beijbom, Oscar},
  booktitle={Proceedings of the IEEE/CVF conference on computer vision and pattern recognition},
  pages={12697--12705},
  year={2019}
}

@inproceedings{10625456,
  author    = {An, Byoungman and Jang, Seonghyun and Jang, Soohyun and Jang, Junhyek and Shin, Daekyo and Yoon, Sanghun},
  title     = {Experimental 5G-NR-V2X Evaluation in a Real-Life Highway and Proving Ground Environment},
  booktitle = {2024 Fifteenth International Conference on Ubiquitous and Future Networks (ICUFN)},
  pages     = {13--18},
  year      = {2024},
  doi       = {10.1109/ICUFN61752.2024.10625456}
}

\end{document}